\newcommand{\Val}[2][]{#1\mathfrak{X}_{#2}}
\newcommand{\ctffactor}{ctf-factor}
\newcommand{\Ctffactor}{Ctf-factor}
\newcommand{\ctffactors}{ctf-factors}
\newcommand{\im}[1]{\lVert{#1}\lVert}
\newcommand{\dm}[1]{\widehat{#1}}
\newcommand{\HH}{\mathcal{H}}
\title{Nested Counterfactual Identification\\from Arbitrary Surrogate Experiments}
\author{%
  Juan D.~Correa\\
  Columbia University\\
  \texttt{jdcorrea@cs.columbia.edu} \\
   \And
   Sanghack Lee \\
   Seoul National University \\
   \texttt{sanghack@snu.ac.kr} \\
   \And
   Elias Bareinboim \\
   Columbia University \\
   \texttt{eb@cs.columbia.edu}
}
\begin{document}

\maketitle

\begin{abstract}
The \textit{Ladder of Causation} describes three qualitatively different types of activities an agent may be interested in engaging in, namely, seeing (observational), doing (interventional), and imagining (counterfactual) (Pearl and Mackenzie, 2018). 
The inferential challenge imposed by the causal hierarchy is that data is collected by an agent observing or intervening in a system (layers 1 and 2), while its goal may be to understand what would have happened had it taken a different course of action, contrary to what factually ended up happening (layer 3). 
While there exists a solid understanding of the conditions under which cross-layer inferences are allowed from observations to interventions, the results are somewhat scarcer when targeting counterfactual quantities.  
In this paper, we study the identification of nested counterfactuals from an arbitrary combination of observations and experiments. 
Specifically, building on a more explicit definition of nested counterfactuals, we prove the counterfactual unnesting theorem (CUT), which allows one to map arbitrary nested counterfactuals to unnested ones. 
For instance, applications in mediation and fairness analysis usually evoke notions of direct, indirect, and spurious effects, which naturally require nesting. 
Second, we introduce a sufficient and necessary graphical condition for counterfactual identification from an arbitrary combination of observational and experimental distributions. Lastly, we develop an efficient and complete algorithm for identifying nested counterfactuals; failure of the algorithm returning an expression for a query implies it is not identifiable.
\end{abstract}

\section{Introduction}
Counterfactuals provide the basis for notions pervasive throughout human affairs, such as credit assignment, blame and responsibility, and regret. 
One of the most powerful constructs in human reasoning ---``what if?'' questions---  evokes hypothetical conditions usually contradicting the factual evidence.
Judgment and understanding of critical situations found from medicine to psychology to business involve counterfactual reasoning, e.g.: ``Joe received the treatment and died, would he be alive had he not received it?,'' ``Had the candidate been male instead of female, would the decision from the admissions committee be more favorable?,'' or ``Would the profit this quarter remain within 5\% of its value had we increased the price by 2\%?''. By and large, counterfactuals are key ingredients that go in the construction of explanations about why things happened as they did \cite{pearl:2k, pearl:mackenzie2018}.

The structural interpretation of causality provides proper semantics for representing counterfactuals  \cite[Ch.~7]{pearl:2k}. Specifically, each structural causal model (SCM) $\M$ induces a collection of  distributions related to the activities of seeing (called observational), doing (interventional), and imagining (counterfactual), which together were called the \textit{ladder of causation} \cite{pearl:mackenzie2018,bar:etal2020}.
The ladder is a containment hierarchy; each type of distribution can be put in increasingly refined layers: observational content goes in layer 1; experimental in layer 2; counterfactual in layer 3  (\cref{fig:pch}) .

It is understood that if we have all the information in the world about layer 1, there are still questions about layers 2 and 3 that are unanswerable,  or technically undetermined; further, if we have data from layers 1 and 2, there are still questions in the world about layer 3 that are underdetermined \cite{pearl:mackenzie2018,pearl:2k,bar:etal2020}.

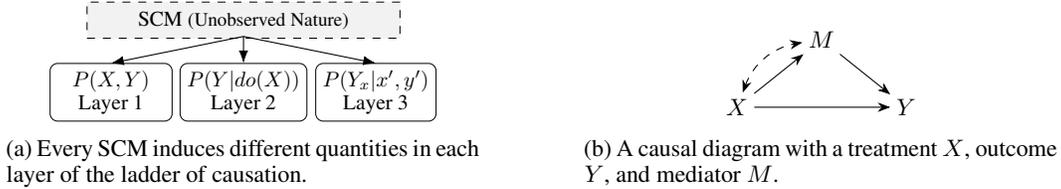
\begin{figure}
    \centering
    \begin{subfigure}[b]{0.45\columnwidth}
        \centering
        \begin{tikzpicture}[scale=0.8, every node/.append style={transform shape}]
            
            \filldraw [fill=gray!10,dashed,line width=0.01mm] (-2.6, 2.5) rectangle (2.6, 1.9);
            
            \node (t1) at (0,2.2) {SCM \small{(Unobserved Nature)}};
            
    	   	\node [draw, line width=0.08mm, align=center,rounded corners=0.1cm] (L1Pv) at (-2.2,1) {$\;\;\, P(X,Y) \;\;\,$\\ Layer 1};
    	   	\node [draw, line width=0.08mm, align=center,rounded corners=0.1cm] (L2Pv) at (0,1) {$P(Y | \doo{X})$\\ Layer 2};
    	   	\node [draw, line width=0.08mm, align=center,rounded corners=0.1cm] (L3Pv) at (2.2,1) {$P(Y_x | x',y')$\\ Layer 3};
    
    	   	\path [-Latex] (t1.south) edge (L1Pv.north);
    	   	\path [-Latex] (t1.south) edge (L2Pv.north);
    	   	\path [-Latex] (t1.south) edge (L3Pv.north);
        \end{tikzpicture}
        \caption{Every SCM induces different quantities in each layer of the ladder of causation.}
        \label{fig:pch}
    \end{subfigure}
    \hfill
	\begin{subfigure}[b]{0.45\columnwidth}
		\centering
		\begin{tikzpicture}[SCM,scale=0.9]
    	   	\node (M) at (1.25,1) {$M$};
    	   	\node (X) at (0,0) {$X$};
    	   	\node (Y) at (2.5,0) {$Y$};

    	    \path [->] (X) edge (M);
    	    \path [->] (M) edge (Y);
    	    \path [->] (X) edge (Y);
    	    \path [<->] (X) edge [dashed, bend left=30] (M);
		\end{tikzpicture}
		\caption{A causal diagram with a treatment $X$, outcome $Y$, and mediator $M$.}
		\label{fig:nde1}
	\end{subfigure}
	\caption{Representation of the ladder of causation and an example of a causal diagram.}
\end{figure}

The inferential challenge in these settings arises since the generating model $\M$ is not fully observed, nor data from all of the layers are necessarily available, perhaps due to the cost or the infeasibility of performing certain interventions. 
One common task found in the literature is to determine the effect of an intervention of a variable $X$ on an outcome $Y$, say $P(Y | do(X))$ (layer 2), using data from observations $P(\*V)$ (layer 1), where $\*V$ is the set of observed variables, and possibly other interventions, e.g., $P(\*V | do(Z))$. Also, qualitative assumptions about the system are usually articulated in the form of a causal diagram $\G$.
This setting has been studied in the literature under the rubric of non-parametric identification from a combination of observations and experiments. Multiple solutions exist, including Pearl's celebrated do-calculus \cite{pearl:95a}, and other increasingly refined solutions that are computationally efficient, sufficient, and necessary  \cite{Spirtes2001,galles:pea95,pearl:rob95,tian:pea02-general-id,shpitser:pea06a,huang:val06-identifiability,bareinboim:pea12-zid,lee:etal19}.

There is growing literature on identification with cross-layer inferences from data in layers 1 and 2 to quantities in layer 3.
For example, a data scientist may be interested in evaluating the effect of a treatment on the group of subjects that receive it instead of those randomly assigned to treatment. This measure is known as the \textit{effect of treatment on the treated} \cite{heckman:92,pearl:2k} and there exists a graphical condition for mapping it to a (layer 2) causal effect \cite{shpitser:pea09}. 
Further, there are also results on the identification of \textit{path-specific effects}, which correspond to counterfactuals that isolate specific paths in the graph \cite{pearl:01}. In particular, \cite{shpitser:she18} provides a complete algorithm for identification from observational data, and \cite{zhang:bar18a} gives identification conditions from observational and experimental data in specific canonical models. 
Moreover, \cite{shpitser:pea07} studied the identification of arbitrary (non-nested) counterfactuals under the assumption that data from experiments in every variable is available. Yet, the problem of identifying such quantities from a subset of the space of all experiments remains open.

For concreteness, consider a counterfactual called \textit{direct effect} in the context of the causal diagram in \cref{fig:nde1}. This quantity quantifies the sensitivity of a variable $Y$ to changes in another variable $X$ while all other factors in the analysis remain fixed.
Suppose $X$ is level of exercise, $M$ cholesterol levels, and $Y$ cardiovascular disease. Exercising can improve cholesterol levels, which in turn affect the chances of developing cardiovascular disease. An interesting question is how much exercise prevents the disease by means other than regulating cholesterol. In counterfactual notation, this is to compare $Y_{x, M_{x}}$ and $Y_{x', M_{x}}$ where $x$ and $x'$ are different values. The first quantity represents the value of $Y$ when $X{=}x$ and $M$ varies accordingly. The second expression is the value $Y$ attains if $X$ is held constant at $x'$ while $M$ still follows $X{=}x$. The difference $E[Y_{x', M_{x}} {-} Y_{x, M_{x}}]$---known as the natural direct effect (NDE)---is non-zero if there is some direct effect of $X$ on $Y$. In this instance, this nested counterfactual is identifiable only if observational data and experiments on $X$ are available.

After all, there is no general identification method for this particular counterfactual family (which also includes indirect and spurious effects) and, more broadly, other arbitrary nested counterfactuals that are well-defined in layer 3.  Our goal is to understand the non-parametric identification of arbitrary nested and conditional counterfactuals when the input consists of any combination of observational and interventional distributions, whatever is available for the data scientist.
More specifically, our contributions are as follows.
\begin{enumerate}[nolistsep,nosep,leftmargin=1.3em]
    \item We look at nested counterfactuals from an SCM perspective and introduce machinery that supports counterfactual reasoning. In particular, we prove the counterfactual unnesting theorem (CUT), which allows one to map any nested counterfactual to an unnested one (\cref{sec:basics}). 
    \item Building on this new machinery, we derive sufficient and necessary graphical conditions and an algorithm to determine the identifiability of marginal nested counterfactuals from an arbitrary combination of observational and experimental distributions (\cref{sec:identification}).
    \item We give a reduction from conditional counterfactuals to marginal ones, and use it to derive a complete algorithm for their identification (\cref{sec:conditional}). 
\end{enumerate}
See the supplemental material for full proofs of the results in the paper.

\subsection{Preliminaries}
We denote variables by capital letters, $X$, and values by small letters, $x$. 
Bold letters, $\*X$ represent sets of variables and $\*x$ sets of values.
The domain of a variable $X$ is denoted by $\Val{X}$.
Two values $\*x$ and $\*z$ are said to be consistent if they share the common values for $\*X\cap\*Z$. We also denote by $\*x\setminus\*Z$ the value of $\*X\setminus\*Z$ consistent with $\*x$ and by $\*x \cap \*Z$ the subset of $\*x$ corresponding to variables in $\*Z$.
We assume the domain of every variable is finite.

Our analysis relies on causal graphs, which we often assign a calligraphic letter, e.g., $\G$, $\HH$, etc.
We denote by $\*V(\HH)$ the set of vertices (i.e., variables) in a graph $\HH$.
Given a graph $\G$, $\G_{\overline{\*W}\underline{\*X}}$ is the result of removing edges coming into variables in $\*W$ and going out from variables in $\*X$. $\G[\*W]$ denotes a vertex-induced subgraph, which includes $\*W$ and the edges among its elements.
We use kinship notation for graphical relationships such as parents, children, descendants, and ancestors of a set of variables.
For example, the set of parents of $\*X$ in $\G$ is denoted by $\Pa[\G]{\*X} := \*X \cup \bigcup_{X\in\*X}\Pa[\G]{X}$. Similarly, we define $\Ch{}$, $\De{}$, and $\An{}$.

To articulate and formalize counterfactual questions, we require a framework that allows us to reason about events corresponding to different \textit{alternative worlds} simultaneously. Accordingly, we employ the Structural Causal Model (SCM) paradigm \cite[Ch.~7]{pearl:2k}. %
An SCM $\M$ is a 4-tuple $\langle \*U, \*V, \mathcal{F}, P(\*u)\rangle$, where $\*U$ is a set of exogenous (latent) variables; $\*V$ is a set of endogenous (observable) variables; $\mathcal{F}$ is a collection of functions such that each variable $V_i \in \*V$ is determined by a function $f_i \in \mathcal{F}$. Each $f_{i}$ is a mapping from a set of exogenous variables $\Ui{i} \subseteq \*U$ and a set of endogenous variables $\Pai{i} \subseteq \*V \setminus \{V_{i}\}$ to the domain of $V_{i}$. The uncertainty is encoded through a probability distribution over the exogenous variables, $P(\*U)$. An SCM $\M$ induces a \emph{causal diagram} $\G$ where every $\*V$ is the set of vertices, there is a directed edge $(V_j \to V_i)$ for every $V_i \in \*V$ and $V_j \in \Pai{i}$, and a bidirected edge $(V_i \dashleftarrow\dashrightarrow V_j)$ for every pair $V_i, V_j \in \*V$ such that $U_i \cap U_j \neq \emptyset$ ($V_i$ and $V_j$ have a common exogenous parent). 

We assume that the underlying model is recursive. That is, there are no cyclic dependencies among the variables. Equivalently, that is to say, that the corresponding causal diagram is acyclic.

The set $\*V$ can be partitioned into subsets called \textit{c-components} \cite{tian:pea02testable-implications} according to a diagram $\G$ such that two variables belong to the same c-component if they are connected in $\G$ by a path made entirely of bidirected edges.

\section{SCMs and Nested Counterfactuals}\label{sec:basics}

Intervening on a system represented by an SCM $\M$ results in a new model differing from $\M$ only on the mechanisms associated with the intervened variables \citep{pearl:94a,dawid:02,dawid:15}.
If the intervention consists on fixing the value of a variable $X$ to a constant $x \in \Val{X}$, it induces a \textit{submodel}, denoted as $\M_{x}$ \cite[Def. 7.1.2]{pearl:2k}.
To formally study nested counterfactuals, we extend this notion to account for models derived from interventions that replace functions from the original SCM with other, not necessarily constant, functions.
\begin{definition}[Derived Model]
\label{def:derived-submodel}
Let $\M$ be an SCM, $\dm{\*U} \subseteq \*U$, $X \in \*V$, and $\dm{X}: \dm{\*U} \rightarrow \Val{X}$ a function. Then, $\M_{\dm{X}}$, called the \emph{derived model} of $\M$ according to $\dm{X}$, is identical to $\M$, except that the function $f_X$ is replaced with a function $\dm{f}_X$ identical to $\dm{X}$.
\end{definition}
This definition is easily extendable to models derived from an intervention on a set $\*X$ instead of a singleton. When $\dm{\*X}$ is a collection of functions $\{\dm{X}: \dm{{\*U}}_X \rightarrow \Val{X}\}_{X \in \*X}$, the derived model $\M_{\dm{\*X}}$ is obtained by replacing each $f_X$ with $\dm{X}$ for $X \in \*X$. Next, we discuss the concept of \textit{potential response} \cite[Def. 7.4.1]{pearl:2k} with respect to derived models.

\begin{definition}[Potential Response]\label{def:potential-response-revisited}
Let $\*X, \*Y \subseteq \*V$ be subsets of observable variables, let $\*u$ be a unit, and let $\dm{\*X}(\*u)$ be a set of functions from $\dm{\*U}_X \to \Val{X}$, for $X \in \*X$ where $\dm{\*U}_X \subseteq \*U$. Then, ${\*Y}_{\*X=\dm{\*X}}(\*u)$ (or ${\*Y}_{\dm{\*X}}(\*u)$, for short) is called the \emph{potential response} of $\*Y$ to $\*X=\dm{\*X}$, and is defined as the solution of ${\*Y}$, for a particular $\*u$, in the derived model $\M_{\dm{\*X}}$.
\end{definition}

A potential response ${Y}_{\dm{\*X}}(\*u)$ describes the value that variable $Y$ would attain for a unit (or individual) $\*u$ if the intervention $\dm{\*X}$ is performed. This concept is tightly related to that of \textit{potential outcome}, but the former explicitly allows for interventions that do not necessarily fix the variables in $\*X$ to a constant value. Averaging over the space of $\*U$, a potential response ${Y}_{\dm{\*X}}(\*u)$ induces a random variable that we will denote simply as ${Y}_{\dm{\*X}}$. If the intervention replaces a function $f_X$ with a potential response of $X$ in $\M$, we say the intervention is \emph{natural}.

When variables are enumerated as $W_1, W_2, \ldots$, we may add square brackets around the part of the subscript denoting interventions. We use $\*W_*$ to denote sets of arbitrary counterfactual variables. Let $\*W_*=\{W_{1[\dm{\*T}_1]}, W_{2[\dm{\*T}_2]}, \ldots\}$ represent a set of counterfactual variables such that $W_i \in \*V$ and $\*T_i \subseteq \*V$ for $i=1,\ldots,l$.
Define $\*V(\*W_*) = \{W \in \*V \mid W_{\dm{\*T}} \in \*W_*\}$, that is, the set of observables that appear in $\*W_{*}$. Let $\*w_*$ represent a vector of values, one for each variable in $\*W_*$ and define $\*w_*(\*X_*)$ as the subset of $\*w_*$ corresponding to $\*X_*$ for any $\*X_* \subseteq \*W_*$.

The probability of any counterfactual event is given by
\begin{align}\label{eq:prob-ctf}
    P(\*Y_*=\*y_*)=\sum_{\left\{\*u \mid \*Y_*(\*u)=\*y_* \right\}}P(\*u),
\end{align}
where the predicate $\*Y_*(\*u)=\*y_*$ means $\bigwedge_{\left\{Y_{\dm{\*X}} \in \*Y_*\right\}}Y_{\dm{\*X}}(\*u)=y$.

When all of the variables in the expression have the same subscript, that is, they belong to the same submodel; we will often denote it as $P_{\*x}(W_1, W_2,\ldots)$.

For most real-world scenarios, having access to a fully specified SCM of the underlying system is unfeasible. Nevertheless, our analysis does not rely on such privileged access but the aspects of the model captured by the causal graph and data samples generated by the unobserved model. 

\subsection{Nested Counterfactuals}

Potential responses can be compounded based on natural interventions. For instance, the counterfactual $Y_{Z_x}(\*u)$ ($Y_{Z=Z_x}(\*u)$) can be seen as the potential response of $Y$ to an intervention that makes $\dm{Z}$ equal to $Z_x$. Notice that $Z_x(\*u)$ is in itself a potential response, but from a different (nested) model. Hence we call $Y_{Z_x}$ a \emph{nested counterfactual}.

Recall the causal diagram in \cref{fig:nde1} and consider once again the NDE as 
\begin{align}
    \textit{NDE}_{x \to x',Z}(Y) = E[Y_{x'Z_x}]-E[Y_x].
\end{align}
The second term is also equal to $Y_{xZ_x}$ as $Z_x$ is consistent with $X=x$, so it is the value $Y$ listens to in $\M_x$. Meanwhile, the first one is indeed related to $P(Y_{x'Z_x})$, the probability of a nested counterfactual.

The following result shows how nested counterfactuals can be written in terms of non-nested ones.
\begin{restatable}[Counterfactual Unnesting Theorem (CUT)]{theorem}{ctfunnesting}
    \label{thm:counterfactualunnesting}
Let $\dm{\*X}, \dm{\*Z}$ be any natural interventions on disjoint sets $\*X, \*Z \subseteq \*V$. Then for $\*Y \subseteq \*V$ disjoint from $\*X$ and $\*Z$, we have
\begin{align}
    P(\*Y_{\dm{\*Z},\dm{\*X}} = \*y) = \sum\nolimits_{\*x \in \Val{\*X}} P(\*Y_{\dm{\*Z},\*x} = \*y, \dm{\*X}=\*x).
\end{align}
\end{restatable}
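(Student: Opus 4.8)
The plan is to reduce the claim to a pointwise (per-unit) identity between potential responses and then integrate it against $P(\*u)$ via the definition of counterfactual probability in \cref{eq:prob-ctf}. The observation that makes this work is that a \emph{natural} intervention, once viewed as a structural replacement in the sense of \cref{def:derived-submodel}, is a function of the exogenous variables only: $\dm{\*X}$ has signature $\dm{\*U}_X \to \Val{X}$ with $\dm{\*U}_X \subseteq \*U$, and likewise for $\dm{\*Z}$. Hence for each fixed unit $\*u$ the value $\*x^{\*u} := \dm{\*X}(\*u)$ is a well-defined element of $\Val{\*X}$, and, crucially, it neither depends on nor interacts with the other intervention $\dm{\*Z}$. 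The heart of the argument is then the following claim: for every unit $\*u$,
\[
  \*Y_{\dm{\*Z},\dm{\*X}}(\*u) \;=\; \*Y_{\dm{\*Z},\,\*x^{\*u}}(\*u), \qquad \text{where } \*x^{\*u} := \dm{\*X}(\*u).
\]

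To prove the claim I would compare the two derived models $\M_{\dm{\*Z},\dm{\*X}}$ and $\M_{\dm{\*Z},\,\*x^{\*u}}$. Both are obtained from $\M$ by replacing $f_{\*Z}$ with the same functions $\dm{\*Z}$ and by leaving every mechanism outside $\*X\cup\*Z$ untouched; they differ only in the mechanism assigned to each $X\in\*X$ --- the exogenous-reading function $\dm{X}$ in the first model versus the constant $\*x^{\*u}\cap\{X\}$ in the second --- and by construction these two mechanisms return the same value at the fixed unit $\*u$. Since the model is recursive, a straightforward induction along a topological order of $\G$ shows that every endogenous variable is assigned the same value in the two derived models evaluated at $\*u$: exogenous variables agree trivially, each $Z\in\*Z$ agrees because $\dm{\*Z}$ reads only $\*u$, each $X\in\*X$ agrees by the previous sentence, and every other variable agrees because its mechanism is unchanged and, by the inductive hypothesis, its parents take equal values. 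In particular the variables in $\*Y$ agree, which is the claim. (Disjointness of $\*X$, $\*Z$, $\*Y$ is used here only to guarantee that both derived models are well-defined and that $\*Y$ is a genuine tuple of response variables in each.)

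Granting the claim, the remainder is bookkeeping with \cref{eq:prob-ctf}. Writing the left-hand side as the $P$-mass of $\{\*u : \*Y_{\dm{\*Z},\dm{\*X}}(\*u)=\*y\}$, substituting the claim, and partitioning those units by the value $\*x := \dm{\*X}(\*u)$ yields
\[
  P(\*Y_{\dm{\*Z},\dm{\*X}} {=} \*y)
  \;=\; \sum_{\*x\in\Val{\*X}} \ \sum_{\{\*u \,:\, \dm{\*X}(\*u)=\*x,\ \*Y_{\dm{\*Z},\*x}(\*u)=\*y\}} P(\*u)
  \;=\; \sum_{\*x\in\Val{\*X}} P\big(\*Y_{\dm{\*Z},\*x} {=} \*y,\ \dm{\*X} {=} \*x\big),
\]
where the last equality is \cref{eq:prob-ctf} read for the joint counterfactual event $\{\*Y_{\dm{\*Z},\*x} = \*y\}\wedge\{\dm{\*X} = \*x\}$; it is exactly here that we need $\dm{\*X}$ to be \emph{natural}, so that ``$\dm{\*X}=\*x$'' is a legitimate (and possibly itself nested) counterfactual event, whose probability is $\sum_{\{\*u\,:\,\dm{\*X}(\*u)=\*x\}}P(\*u)$. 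The only real obstacle is the claim above --- specifically, pinning down that substituting the natural intervention $\dm{\*X}$ by its per-unit constant value $\*x^{\*u}$ does not perturb the solution of the derived model at $\*u$. This is where ``natural'' earns its keep: because $\dm{\*X}$ depends on $\*u$ alone, the solution of $\M_{\dm{\*Z},\dm{\*X}}$ factors through $\*u$ in the same way as that of $\M_{\dm{\*Z},\*x^{\*u}}$, so no circular dependence between $\dm{\*X}$ and $\dm{\*Z}$ can arise and the induction goes through. Everything else --- the appeals to \cref{eq:prob-ctf}, the disjointness checks, and the partition over $\Val{\*X}$ --- is routine.
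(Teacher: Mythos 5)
Your proof is correct and follows essentially the same route as the paper's: write the left-hand side as a sum of $P(\*u)$ over the units inducing the event via \cref{eq:prob-ctf}, partition those units by the value $\*x=\dm{\*X}(\*u)$, and identify each cell with the event $(\*Y_{\dm{\*Z},\*x}=\*y,\ \dm{\*X}=\*x)$. The per-unit consistency claim $\*Y_{\dm{\*Z},\dm{\*X}}(\*u)=\*Y_{\dm{\*Z},\*x^{\*u}}(\*u)$, which you establish by induction along a topological order using the fact that a natural intervention reads only $\*u$, is precisely the step the paper's outline asserts without elaboration, so your write-up is a faithful (and somewhat more explicit) version of the same argument.
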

\begin{proof}[Proof outline]

Based on \cref{eq:prob-ctf}, $P(\*Y_{\dm{\*Z},\dm{\*X}} = \*y)$ can be seen as a sum of the probabilities $P(\*u)$ for the $\*u$ that induce the event $(\*Y_{\dm{\*Z},\dm{\*X}} = \*y)$. Such set of $\*u$ can be partitioned based on the values $\*x=\dm{\*X}(\*u), \*x \in \Val{\*X}$ they induce, which are the same that induce the event $\*Y_{\dm{\*Z},\*x} = \*y, \dm{\*X}=\*x$. Then, the sum over $P(\*u)$ for each subset is equal to the value of the original nested counterfactual.
\end{proof}
For instance, for the model in \cref{fig:nde1} we can write
\begin{align}
    P(Y_{x'Z_x}=y) = \sum\nolimits_{z}P(Y_{x'z}=y,Z_x=z).
\end{align}

As \cref{thm:counterfactualunnesting} allows us to re-write any nested counterfactual in terms of non-nested counterfactuals, we focus on the latter and assume that any given counterfactual is already unnested.

\subsection{Tools for Counterfactual Reasoning}
Before characterizing the identification of counterfactuals from observational and experimental data, we develop from first principles a canonical representation of any such query. First, we extend the notion of ancestors for counterfactual variables, which subsumes the usual one described before.

\begin{definition}[Ancestors, of a counterfactual]\label{def:ctf-ancestors}
Let $Y_{\*x}$ be such that $Y \in \*V, \*X \subseteq \*V$. Then, the set of (counterfactual) ancestors of $Y_\*x$, denoted $\An{Y_\*x}$, consist of each $W_\*z$, such that $W \in \An[\G_{\underline{\*X}}]{Y}$ (which includes $Y$ itself), and $\*z = \*x \cap \An[\G_{\overline{\*X}}]{W}$. 
\end{definition}
For a set of variables $\*W_*$, we define $\An{\*W_*}$ as the union of the ancestors of each variable in the set. That is, $\An{\*W_*}=\bigcup_{W_{\*t} \in \*W_*}\An{W_{\*t}}$.
For instance, in \cref{fig:backdoor}, $\An{Y_x}=\{Y_{x}, Z\}$, $\An{X_{yz}}=\{X_z\}$ and $\An{Y_z}=\{Y_z, X_z\}$ (depicted in \cref{fig:backdoor-ancz}). In \cref{fig:napkin} $\An{Z, Y_z}=\{Y_z, X_z, Z, W\}$ and $\An{Y_x}=\{Y_x\}$ (represented in \cref{fig:napkin-ancx}). 

\begin{figure}
    \centering
	\begin{subfigure}[t]{0.24\columnwidth}
		\centering
		\begin{tikzpicture}[SCM,scale=0.9]
    	   	\node (Z) at (1.25,1) {$Z$};
    	   	\node (X) at (0,0) {$X$};
    	   	\node (Y) at (2.5,0) {$Y$};
    	
    	    \path [->] (X) edge (Y);
    	    \path [->] (Z) edge (X);
    	    \path [->] (Z) edge (Y);
		\end{tikzpicture}
		\caption{``Backdoor'' graph.}
		\label{fig:backdoor}
	\end{subfigure}
	\hfill
	\begin{subfigure}[t]{0.24\columnwidth}
		\centering
		\begin{tikzpicture}[SCM,scale=0.9]
    	   	\node [removed] (Z) at (1.25,1) {$Z$};
    	   	\node (X) at (0,0) {$X_z$};
    	   	\node (Y) at (2.5,0) {$Y_z$};
    	
    	    \path [->] (X) edge (Y);
    	    \path [->,removed] (Z) edge (X);
    	    \path [->,removed] (Z) edge (Y);
		\end{tikzpicture}
		\caption{Graphical representation of the ancestors of $Y_z$.}
		\label{fig:backdoor-ancz}
	\end{subfigure}
	\hfill
	\begin{subfigure}[t]{0.24\columnwidth}
		\centering
		\begin{tikzpicture}[SCM,scale=0.9]
    	   	\node (X) at (0,0) {$X$};
    	   	\node (Z) at (0.8,0.6) {$Z$};
    	   	\node (W) at (1.6,1.2) {$W$};
    	   	\node (Y) at (3.0,0) {$Y$};

    	    \path [->] (W) edge (Z);
    	    \path [->] (Z) edge (X);
    	    \path [->] (X) edge (Y);
    	    \path [<->] (X) edge [dashed, bend left=40] (W);
    	    \path [<->] (W) edge [dashed, bend left=40] (Y);
		\end{tikzpicture}
		\caption{``Napkin'' graph.}
		\label{fig:napkin}
	\end{subfigure}
	\hfill
	\begin{subfigure}[t]{0.24\columnwidth}
		\centering
		\begin{tikzpicture}[SCM,scale=0.9]
    	   	\node [removed] (X) at (0,0) {$X$};
    	   	\node [removed] (Z) at (0.8,0.6) {$Z$};
    	   	\node [removed] (W) at (1.6,1.2) {$W$};
    	   	\node (Y) at (3.0,0) {$Y_x$};

    	    \path [->,removed] (W) edge (Z);
    	    \path [->,removed] (Z) edge (X);
    	    \path [->,removed] (X) edge (Y);
    	    \path [<->] (X) edge [dashed, bend left=40] (W);
    	    \path [<->] (W) edge [dashed, bend left=40] (Y);
		\end{tikzpicture}
		\caption{Graphical representation of the ancestors of $Y_x$.}
		\label{fig:napkin-ancx}
	\end{subfigure}
	\caption{Two causal diagrams and the subgraphs considered when finding sets of ancestors for a counterfactual variable.}
\end{figure}
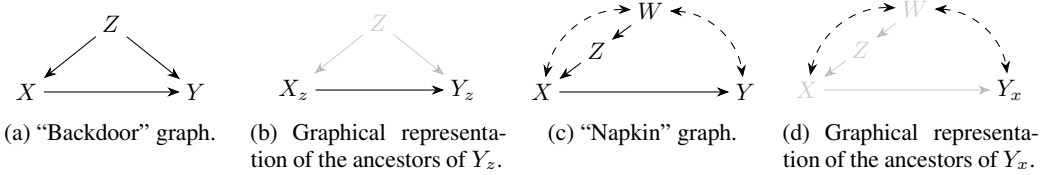

Given a counterfactual variable $\*Y_{\*x}$, it could be the case that some values in $\*x$ become causally irrelevant to $Y$ after the rest of $\*x$ has been fixed. Formally,
\begin{restatable}{lemma}{lemintminimal}\label{lem:int-minimal}
Let $\im{Y_\*x}:=Y_\*z$ where $\*Z = \*X \cap \An[\G_{\overline{\*X}}]{Y}$ and $\*z$ is consistent with $\*x$. Then, $Y_\*x=\im{Y_\*x}$.
\end{restatable}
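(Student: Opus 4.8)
The plan is to strengthen the claim to a statement about individual units: I will show $Y_\*x(\*u) = Y_\*z(\*u)$ for every $\*u$, which immediately yields the asserted equality of the induced counterfactual variables. Fix $\*u$ and abbreviate $\*A := \An[\G_{\overline{\*X}}]{Y}$, so that $\*Z = \*A \cap \*X$ and, by hypothesis, $\*z$ is the restriction of $\*x$ to $\*Z$. The two relevant submodels are $\M_\*x$ and $\M_\*z$; I would compare how their structural equations evaluate at $\*u$ and prove by induction that every $V \in \*A$ is assigned the same value in both submodels. Since $Y \in \*A$, this finishes the argument.

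First I would record two bookkeeping facts about $\*A$. (a) Because $\*Z \subseteq \*X$, a vertex $V \in \*A$ is intervened on in $\M_\*x$ iff $V \in \*Z$ iff it is intervened on in $\M_\*z$; in particular, the ``discarded'' interventions $\*X \setminus \*Z$ act only on vertices outside $\*A$. (b) If $V \in \*A \setminus \*X$ and $P \to V$ is an edge of $\G$, then that edge survives in $\G_{\overline{\*X}}$ (no edge into $V$ is cut there, since $V \notin \*X$), so $P \in \An[\G_{\overline{\*X}}]{Y} = \*A$, and $P$ precedes $V$ in every topological order of $\G$. Consequently the $\G$-parents of such a $V$ all lie in $\*A$, and $V$'s mechanism is the original $f_V$ in $\M$, in $\M_\*x$, and in $\M_\*z$ alike.

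Next I would induct along a topological order of $\G$ (which is also a topological order of $\G_{\overline{\*X}}$ and of $\G_{\overline{\*Z}}$, since deleting edges preserves acyclicity), establishing that each $V \in \*A$ gets the same value in $\M_\*x$ and $\M_\*z$ at $\*u$. If $V \in \*Z$, both submodels fix $V$ to its coordinate in $\*x$ (using the consistency of $\*z$ with $\*x$), so they agree. If $V \in \*A \setminus \*X$, then by (a) $V$ is intervened on in neither submodel, so in both it equals $f_V$ evaluated at the values of its $\G$-parents and at the exogenous input $u_V$ (the same input, since an intervention only replaces mechanisms); by (b) every such parent lies in $\*A$ and precedes $V$, so the inductive hypothesis equates their values across the two submodels, and hence $V$'s value agrees as well. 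The case $V \in \*X \setminus \*Z$ cannot arise for $V \in \*A$ by (a). Specializing to $V = Y$ gives $Y_\*x(\*u) = Y_\*z(\*u)$, as desired.

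The only delicate point --- and the closest thing to an obstacle --- is that $\M_\*x$ and $\M_\*z$ carry different mutilated diagrams, $\G_{\overline{\*X}}$ versus $\G_{\overline{\*Z}}$, so the argument must explicitly verify that the two submodels still assign identical mechanisms to every vertex of $\*A$ (a constant mechanism with the same constant on each vertex of $\*Z$, and $f_V$ on each vertex of $\*A \setminus \*Z$), and that $\*A$ is ``input-closed'' in the sense of (b), so that the induction only ever needs equalities about vertices already inside $\*A$. Granted these observations, the remainder is a routine evaluation of the structural equations.
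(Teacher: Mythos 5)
Your proof is correct and is essentially the standard argument the paper relies on: a unit-level induction along a topological order showing that every vertex of $\An[\G_{\overline{\*X}}]{Y}$ receives the same value in $\M_{\*x}$ and $\M_{\*z}$, using that the discarded interventions $\*X \setminus \*Z$ touch only non-ancestors and that the ancestor set is closed under taking $\G$-parents of its non-intervened members. The two bookkeeping facts you isolate are exactly the points that make the induction go through, so nothing is missing.
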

Moreover, such simplification may reveal counterfactual expressions with equivalent or contradicting events. In \cref{fig:napkin}, $(Y_{xz}=y, Y_{xz'}=y')=(Y_{x}=y, Y_{x}=y')$ which has probability $0$ if $y \neq y'$, or $(Y_{xz}=y, Y_{xz'}=y)$ that is simply $(Y_{x}=y)$. Similarly, the probabilities of counterfactuals events of the form $P(X_x=x')$, $x \neq x'$, and $P(X_x=x)$ are trivially $0$ and $1$ respectively.

For a set of counterfactual variables $\*Y_*$ let $\im{\*Y_*}=\bigcup_{Y_{\*x} \in \*Y_*}\im{Y_{\*x}}$. Notice that each variable in the ancestral set is ``interventionally minimal'' in the sense of \cref{lem:int-minimal}.

Probabilistic and causal inference with graphical models exploits local structure among variables, specifically parent-child relationships, to infer and even estimate probabilities. In particular, Tian \cite{tian:pea02testable-implications} introduced \textit{c-factors} which have proven instrumental in solving many problems in causal inference. We naturally generalize this notion to the counterfactual setting with the following definition.

\begin{definition}[Counterfactual Factor (\ctffactor{})]
A counterfactual factor is a distribution of the form
\begin{align}
    P(W_{1[\pai{1}]}=w_1, W_{2[\pai{2}]}=w_2, \ldots, W_{l[\pai{l}]}=w_l),
\end{align}
where each $W_i \in \*V$ and there could be $W_i = W_j$ for some $i,j \in \{1,\ldots, l\}$.
\end{definition}

For example, for \cref{fig:napkin} $P(Y_x=y, Y_{x'}=y')$, $P(Y_x=y, X_z=x)$ are \ctffactors{} but $P(Y_z=y,Z_w=z)$ is not. Using the notion of ancestrality introduced in \cref{def:ctf-ancestors}, we can factorize counterfactual probabilities as \ctffactors{}.

\begin{restatable}[Ancestral set factorization]{theorem}{thmancctfcomp}\label{thm:ancestral-ctf-comp}
Let $\*W_*$ be an ancestral set, that is, $\An{\*W_*}=\*W_*$, and let $\*w_*$ be a vector with a value for each variable in $\*W_*$. Then,
\begin{align}
    P(\*W_*=\*w_*) = P\left(\bigwedge\nolimits_{W_{\*t} \in \*W_*}W_{\pai{w}}=w\right), \label{eq:ancset-ccfactor}
\end{align}
where each $w$ is taken from $\*w_*$ and $\pai{w}$ is determined for each $W_{\*t} \in \*W_{*}$ as follows:
\begin{enumerate}[label={(\roman*)},topsep=0pt,nolistsep,nosep,leftmargin=2em,itemsep=0.2em]
    \item the values for variables in $\Pai{w} \cap \*T$ are the same as in $\*t$, and
    \item the values for variables in $\Pai{w} \setminus \*T$ are taken from $\*w_*$ corresponding to the parents of $\*W_{\*t}$.
\end{enumerate}
\end{restatable}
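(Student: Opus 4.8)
The plan is to prove \cref{eq:ancset-ccfactor} unit by unit. By \cref{eq:prob-ctf}, each side equals $\sum_{\*u \in S} P(\*u)$ for the set $S$ of units on which the corresponding counterfactual event holds, so it suffices to show the two events hold on exactly the same units: for every $\*u$, that $W_{\*t}(\*u)=w$ for all $W_{\*t}\in\*W_*$ \emph{iff} $W_{\pai{w}}(\*u)=w$ for all $W_{\*t}\in\*W_*$, where $w$ is the entry of $\*w_*$ attached to $W_{\*t}$ and $\pai{w}$ is formed as in items~(i)--(ii). I assume $W\notin\*T$ for every $W_{\*t}\in\*W_*$ (the only exceptions are the trivially-valued $X_x$, cf.\ the remark after \cref{lem:int-minimal}, handled separately). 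For a fixed $W_{\*t}\in\*W_*$ and a parent $P\in\Pai{w}\setminus\*T$, call $P_{\*z}$ with $\*z=\*t\cap\An[\G_{\overline{\*T}}]{P}$ the \emph{$P$-parent of $W_{\*t}$}; this is exactly the variable whose $\*w_*$-value item~(ii) uses, which I write $w_P$. Since $P\in\Pai{w}\setminus\*T$ forces $P\in\An[\G_{\underline{\*T}}]{W}$, \cref{def:ctf-ancestors} puts $P_{\*z}$ in $\An{W_{\*t}}$, hence in $\*W_*=\An{\*W_*}$, so each $w_P$ is well defined.

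Two facts carry the argument. \textbf{(a)} Since the model is recursive and $W\notin\*T$, the structural equation of $W$ is in force in $\M_{\*t}$, so $W_{\*t}(\*u)$ equals $f_W$ evaluated at the values its parents take in $\M_{\*t}$ --- namely $t_P$ for $P\in\Pai{w}\cap\*T$ and $P_{\*t}(\*u)$ for $P\in\Pai{w}\setminus\*T$ --- together with the exogenous inputs read off $\*u$; likewise $W_{\pai{w}}(\*u)$ equals $f_W$ evaluated at $t_P$ for $P\in\Pai{w}\cap\*T$, at $w_P$ for $P\in\Pai{w}\setminus\*T$, and at the same exogenous inputs. Hence, if $P_{\*t}(\*u)=w_P$ for every $P\in\Pai{w}\setminus\*T$, then $W_{\*t}(\*u)=W_{\pai{w}}(\*u)$. \textbf{(b)} \cref{lem:int-minimal} applied to $P$ under $\*t$ gives $P_{\*t}(\*u)=P_{\*z}(\*u)$ for that same $\*z=\*t\cap\An[\G_{\overline{\*T}}]{P}$, so $P_{\*t}$ and the $P$-parent of $W_{\*t}$ agree as functions of $\*u$. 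Combining (a) and (b): \emph{if} $P_{\*z}(\*u)=w_P$ for every parent $P\in\Pai{w}\setminus\*T$, \emph{then} $W_{\*t}(\*u)=W_{\pai{w}}(\*u)$.

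The forward implication is then immediate: if $W_{\*t}(\*u)=w$ for every $W_{\*t}\in\*W_*$, then in particular every $P$-parent $P_{\*z}\in\*W_*$ satisfies $P_{\*z}(\*u)=w_P$, so the combined statement yields $W_{\pai{w}}(\*u)=W_{\*t}(\*u)=w$ for every $W_{\*t}\in\*W_*$.

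The reverse implication is the crux, and I expect it to be the main obstacle: $f_W$ need not be injective, so (a) cannot be read backwards --- the parent inputs must be fixed before one concludes anything about $W$. The plan is an induction along a topological order of $\G$: enumerate the members of $\*W_*$ so that whenever $P$ is a strict ancestor of $W$ in $\G$, every member on $P$ precedes every member on $W$. Assuming the right-hand event holds at $\*u$ and, inductively, that every member of $\*W_*$ enumerated before the current $W_{\*t}$ evaluates at $\*u$ to its attached $\*w_*$-entry, consider $W_{\*t}$: each $P\in\Pai{w}\setminus\*T$ is a strict ancestor of $W$ in $\G$, so its $P$-parent $P_{\*z}\in\*W_*$ has already been handled and $P_{\*z}(\*u)=w_P$ (for the first member, $\Pai{w}\setminus\*T$ is empty and there is nothing to check). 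By the combined statement, $W_{\*t}(\*u)=W_{\pai{w}}(\*u)=w$, closing the induction. As the two events hold on precisely the same set of units, summing $P(\*u)$ over it gives \cref{eq:ancset-ccfactor}.
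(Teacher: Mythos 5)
Your proof is correct and follows essentially the same route as the paper's: an induction along a topological order of $\G$ in which each parent outside the intervention set is absorbed into the subscript via composition (your fact (a), read off the structural equation of $W$, which is intact since $W\notin\*T$) and the subscripts are trimmed via \cref{lem:int-minimal}, i.e., the exclusion restrictions. The only difference is presentational: you establish the equality of the two events unit by unit and derive composition/exclusion from first principles rather than citing the axioms, which makes the reverse implication (the direction the terse outline leaves implicit) fully explicit.
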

\begin{proof}[Proof outline.]
Following a reverse topological order in $\G$, look at each $W_{i\*t_i} \in \*W_*$. Since any parent of $W_i$ not in $\*T_i$ must appear in $\*W_*$, the composition axiom \cite[7.3.1]{pearl:2k} licenses adding them to the subscript. Then, by exclusion restrictions \cite{pearl:95a}, any intervention not involving $\Pa{W_i}$ can be removed to obtain the form in \cref{eq:ancset-ccfactor}.
\end{proof}

For example, consider the diagram in \cref{fig:napkin} and the counterfactual $P(Y_x=y \mid X=x')$ known as the \textit{effect of the treatment on the treated} (ETT) \cite{heckman:92,pearl:2k}. First note that $P(Y_x=y \mid X=x')=P(Y_x=y, X=x')/P(X=x')$ and that $\An{Y_x, X}=\{Y_x, X, Z, W\}$, then
\begin{align}
    P(Y_x=y, X=x')=\sum\nolimits_{z,w}P(Y_x=y, X=x', Z=z, W=w).
\end{align}
Then, by \cref{thm:ancestral-ctf-comp} we can write
\begin{equation}\label{eq:ett-ancestral-set}
    P(Y_x=y, X=x')=\sum\nolimits_{z,w}P(Y_x=y, X_z=x', Z_w=z, W=w).
\end{equation}%
Moreover, the following result describes a factorization of \ctffactors{} based on the c-component structure of the graph, which will prove instrumental in the next section.
\begin{restatable}[Counterfactual factorization]{theorem}{ccompfactorization}\label{thm:cc-comp-factorization}
Let $P(\*W_*=\*w_*)$ be a \ctffactor{}, let $W_1<W_2<\cdots$ be a topological order over the variables in $\G[\*V(\*W_*)]$, and let $\*C_1, \ldots, \*C_k$ be the c-components of the same graph. Define $\*C_{j*}=\{W_{\pai{w}} \in \*W_* \mid W \in \*C_j\}$ and $\*c_{j*}$ as the values in $\*w_*$ corresponding to $\*C_{j*}$, then $P(\*W_*=\*w_*)$ decomposes as
\begin{align}
    P(\*W_*=\*w) = \prod\nolimits_{j}P(\*C_{j*}=\*c_{j*}). \label{eq:\ctffactor{}-factorization}
\end{align}
Furthermore, each factor can be computed from $P(\*W_*=\*w)$ as 
\begin{align}
    P(\*C_{j*}=\*c_{j*}) = \prod_{\{W_i \in \*C_j\}}\frac{
        \sum_{\{w \mid W_{\pai{w}} \in \*W_*, W_i<W\}}P(\*W_*=\*w_*)
    }{
        \sum_{\{w \mid W_{\pai{w}} \in \*W_*, W_{i-1}<W\}}P(\*W_*=\*w_*)
    }.\label{eq:cc-comp-comp}
\end{align}%
\end{restatable}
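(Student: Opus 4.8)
The plan is to mirror Tian's c-factorization argument for interventional distributions \cite{tian:pea02testable-implications}, but carried out in the ``parallel-worlds'' probability space of a single SCM $\M$ over the exogenous units $\*u$, exploiting that every variable in a \ctffactor{} appears with its own parents pinned in the subscript. First I would set up notation: write the \ctffactor{} $P(\*W_*=\*w_*)$ as $\sum_{\*u}P(\*u)\prod_i \mathbb{1}[W_{i[\pai{w_i}]}(\*u)=w_i]$, and observe that because each $W_{i[\pai{w_i}]}$ fixes exactly the endogenous parents of $W_i$, the value $W_{i[\pai{w_i}]}(\*u)$ depends on $\*u$ only through the exogenous parents $\Ui{i}$ of $W_i$ --- i.e. $W_{i[\pai{w_i}]}(\*u) = f_i(\pai{w_i},\*u_i)$ where $\*u_i = \*u\cap\Ui{i}$. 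This is the key reduction: the indicator for $W_i$ is a deterministic function of $\*u$ restricted to $\Ui{i}$.

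Next I would handle the product decomposition, Equation \eqref{eq:\ctffactor{}-factorization}. Group the factors by c-component: variables $W_i, W_{i'}$ in different c-components $\*C_j,\*C_{j'}$ of $\G[\*V(\*W_*)]$ have $\Ui{i}\cap\Ui{i'}=\emptyset$ (no shared exogenous parent, since a shared exogenous parent is exactly what a bidirected edge encodes, and c-components are the bidirected-connected blocks). Hence the joint distribution $P(\*u)$ --- after marginalizing to the relevant exogenous sets --- factorizes across c-components, assuming, as is standard, that the exogenous variables of distinct c-components are mutually independent; more carefully, one partitions $\*U$ into the disjoint unions $\*U_{\*C_j}=\bigcup_{W\in\*C_j}\Ui{W}$ plus a remainder, and uses that the event for $\*C_{j*}$ is measurable with respect to $\*U_{\*C_j}$ only. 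The product $\prod_j P(\*C_{j*}=\*c_{j*})$ then follows because a product of indicators over independent blocks has expectation equal to the product of expectations. I would be explicit that this independence is inherited from the Markovianity of the exogenous layer in the standard SCM semantics (same assumption underpinning Tian's original c-component result).

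Finally, for the recovery formula \eqref{eq:cc-comp-comp} I would follow the telescoping-ratio identity familiar from Tian's Lemma: fix the topological order $W_1<W_2<\cdots$ on $\*V(\*W_*)$, and for each prefix let $\*W_*^{(i)}=\{W_{\pai{w}}\in\*W_*\mid W_{i}<W\}$ (the variables strictly above $W_i$ in the order), with $Q^{(i)}:=\sum_{w:W_{\pai{w}}\in\*W_*^{(i)}}P(\*W_*=\*w_*)$ the corresponding marginal. Using the product decomposition across c-components together with the chain-rule structure of the topological order --- each $Q^{(i)}$ itself factorizes into c-component pieces, and consecutive ratios $Q^{(i)}/Q^{(i-1)}$ isolate exactly the conditional contribution of $W_i$ --- one shows $\prod_{W_i\in\*C_j}Q^{(i)}/Q^{(i-1)}$ telescopes partially and collapses to $P(\*C_{j*}=\*c_{j*})$. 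The main obstacle I anticipate is bookkeeping: making the ``same-world vs.\ different-world'' accounting precise when a variable $W$ occurs multiple times in $\*W_*$ with distinct subscripts $\pai{w}$ (so $W_i=W_j$ is allowed), and checking that the summation ranges in \eqref{eq:cc-comp-comp} correctly pick out the intended subset of copies; the independence and telescoping steps are routine once the indicator-as-function-of-$\Ui{i}$ reduction and the c-component-wise exogenous independence are in place.
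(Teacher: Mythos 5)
Your proposal is correct and follows essentially the same route as the paper's proof: reduce each conjunct $W_{i[\pai{w_i}]}(\*u)$ to a function of $\*u\cap \Ui{i}$ alone (possible precisely because a \ctffactor{} pins every endogenous parent), invoke independence of the disjoint exogenous blocks across c-components of $\G[\*V(\*W_*)]$ to get the product form, and then obtain \eqref{eq:cc-comp-comp} by the Tian-style telescoping of prefix marginals along the topological order, noting that each prefix marginal is itself a \ctffactor{} that factorizes the same way. The two points you flag --- the standing semi-Markovian independence assumption on $P(\*U)$ and the bookkeeping for repeated base variables with distinct subscripts --- are exactly the details that need care, and your treatment of both is sound.
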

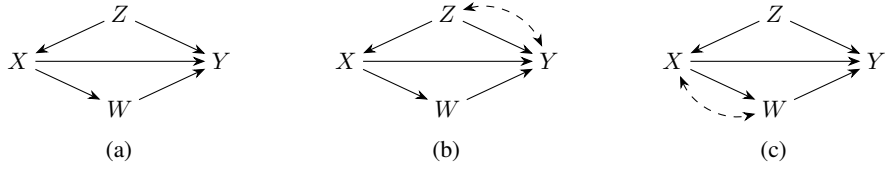
\begin{figure}
    \centering
    \hfill
	\begin{subfigure}[b]{0.24\columnwidth}
		\centering
		\begin{tikzpicture}[SCM,scale=0.9]
    	   	\node (X) at (0,0) {$X$};
    	   	\node (Z) at (1.5,0.7) {$Z$};
    	   	\node (W) at (1.5,-0.7) {$W$};
    	   	\node (Y) at (3.0,0) {$Y$};

    	    \path [->] (Z) edge (X);
    	    \path [->] (Z) edge (Y);
    	    \path [->] (X) edge (Y);
    	    \path [->] (X) edge (W);
    	    \path [->] (W) edge (Y);
		\end{tikzpicture}
		\caption{}
		\label{fig:ex-ctf-measures:a}
	\end{subfigure}
	\hfill
	\begin{subfigure}[b]{0.24\columnwidth}
		\centering
		\begin{tikzpicture}[SCM,scale=0.9]
    	   	\node (X) at (0,0) {$X$};
    	   	\node (Z) at (1.5,0.7) {$Z$};
    	   	\node (W) at (1.5,-0.7) {$W$};
    	   	\node (Y) at (3.0,0) {$Y$};

    	    \path [->] (Z) edge (X);
    	    \path [->] (Z) edge (Y);
    	    \path [->] (X) edge (Y);
    	    \path [->] (X) edge (W);
    	    \path [->] (W) edge (Y);
    	    \path [<->] (Z) edge [dashed, bend left=40] (Y);
		\end{tikzpicture}
		\caption{}
		\label{fig:ex-ctf-measures:b}
	\end{subfigure}
	\hfill
	\begin{subfigure}[b]{0.24\columnwidth}
		\centering
		\begin{tikzpicture}[SCM,scale=0.9]
    	   	\node (X) at (0,0) {$X$};
    	   	\node (Z) at (1.5,0.7) {$Z$};
    	   	\node (W) at (1.5,-0.7) {$W$};
    	   	\node (Y) at (3.0,0) {$Y$};

    	    \path [->] (Z) edge (X);
    	    \path [->] (Z) edge (Y);
    	    \path [->] (X) edge (Y);
    	    \path [->] (X) edge (W);
    	    \path [->] (W) edge (Y);
    	    \path [<->] (X) edge [dashed, bend right=40] (W);
		\end{tikzpicture}
		\caption{}
		\label{fig:ex-ctf-measures:c}
	\end{subfigure}
	\hfill\null
	\caption{Three causal diagrams representing plausible structures in mediation analysis.}
	\label{fig:ex-ctf-measures}
\end{figure}
Armed with these results, we consider the identification problem in the next section.

\section{Counterfactual Identification from Observations and Experiments}\label{sec:identification}
In this section, we consider the identification of a counterfactual probability from a collection of observational and experimental distributions. This task can be seen as a generalization of that in \cite{lee:etal19} where the available data is the same, but the query is a causal effect $P_{\*x}(\*Y)$.
Let $\Z=\{\*Z_1, \*Z_2, \ldots \}, \*Z_j \subseteq \*V$, and assume that all of $\{P_{\*z_j}(\*V)\}_{\*z_j \in \Val{\*Z_j}, \*Z_j \in \Z}$ are available. Notice that $\*Z_j=\emptyset$ is a valid choice corresponding to $P(\*V)$ the observational (non-interventional) distribution. 

\begin{definition}[Counterfactual Identification]
A query $P(\*Y_*=\*y_*)$ is said to be identifiable from $\Z$ in $\G$, if $P(\*Y_*=\*y_*)$ is uniquely computable from the distributions $\{P_{\*z_j}(\*V)\}_{\*z_j \in \Val{\*Z_j}, \*Z_j \in \Z}$ in any causal model which induces $\G$.
\end{definition}

Given an arbitrary query $P(\*Y_*=\*y_*)$, we could express it in terms of \ctffactors{} by writing $P(\*Y_*=\*y_*)=\sum_{\*d_* \setminus \*y_*}P(\*D_* = \*d_*)$ where $\*D_*=\An{\*Y_*}$ and then using \cref{thm:ancestral-ctf-comp} to write $P(\*D_* = \*d_*)$ as a \ctffactor{}. For instance, the ancestral set $\*W_*=\{Y_x, X, Z, W\}$ with $\*w = \{y, x', z, w\}$ in \cref{eq:ett-ancestral-set} can be written in terms of \ctffactors{} as
\begin{align}
    P(Y_x=y, X_z=x', Z_w=z, W=w) = P(Y_x=y, X_z=x', W=w)P(Z_w=z).
\end{align}
The following lemma characterizes the relationship between the identifiability of $P(\*Y_*=\*y_*)$ and $P(\*D_* = \*d_*)$.

\begin{restatable}{lemma}{lemancnonid}\label{lem:factor-non-id-sum-non-id}
Let $P(\*W_* =\*w_*)$ be a \ctffactor{} and let $\*Y_* \subseteq \*W_*$ be such that $\*W_* = \An{\*Y_*}$. Then, $\sum_{\*w_* \setminus \*y_*}P(\*W_* =\*w_*)$ is identifiable from $\Z$ if and only if $P(\*W_* =\*w_*)$ is identifiable from $\Z$.
\end{restatable}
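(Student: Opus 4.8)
The plan is to prove the two directions of the equivalence separately, with the ``if'' direction being essentially trivial and the ``only if'' direction carrying the real content. For the easy direction, suppose $P(\*W_*=\*w_*)$ is identifiable from $\Z$ in $\G$; then in every model inducing $\G$ it is a fixed function of $\{P_{\*z_j}(\*V)\}$. Marginalization is a deterministic post-processing step, so $\sum_{\*w_*\setminus\*y_*}P(\*W_*=\*w_*)$ is also a fixed function of the same inputs, hence identifiable. This needs no graphical argument at all.

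For the ``only if'' direction I would argue by contraposition: assuming $P(\*W_*=\*w_*)$ is \emph{not} identifiable from $\Z$, I must exhibit two models $\M_1,\M_2$ inducing $\G$ that agree on all of $\{P_{\*z_j}(\*V)\}$ but disagree on $\sum_{\*w_*\setminus\*y_*}P(\*W_*=\*w_*)$. By non-identifiability of the \ctffactor{}, there already exist two such models $\M_1,\M_2$ agreeing on the data but with $P^{\M_1}(\*W_*=\*w_*)\neq P^{\M_2}(\*W_*=\*w_*)$ for some particular $\*w_*$; the obstacle is that this single disagreement might cancel out once we sum over $\*w_*\setminus\*y_*$. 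The key step is therefore a \emph{separation / amplification} argument showing the disagreement can be concentrated on a single consistent completion $\*w_*$, so that no cancellation is possible. Here I would invoke \cref{thm:cc-comp-factorization}: since $\*W_*=\An{\*Y_*}$ is an ancestral \ctffactor{}, it factorizes over the c-components $\*C_{1*},\ldots,\*C_{k*}$ of $\G[\*V(\*W_*)]$, and each $\*C_{j*}$ is individually recoverable from $P(\*W_*=\*w_*)$ via \cref{eq:cc-comp-comp}. The disagreement between $\M_1$ and $\M_2$ must thus occur in at least one c-component factor $P(\*C_{j*}=\*c_{j*})$; because every variable of $\*W_*\setminus\*Y_*$ lies in $\An{\*Y_*}$, each such variable is an ancestor of some $Y\in\*Y_*$, and its value is ``pinned down'' relative to the values of the $\*Y_*$-variables through the parent-consistency conditions (i)--(ii) of \cref{thm:ancestral-ctf-comp}. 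I would make this precise by showing that for the $\*y_*$ witnessing the disagreement, the summand is nonzero for exactly one (or one can engineer exactly one) consistent choice of $\*w_*\setminus\*y_*$ in the two models — using the freedom to choose the mechanisms of the non-$\*Y_*$ variables to be deterministic given their counterfactual parents — so that $\sum_{\*w_*\setminus\*y_*}P^{\M_i}(\*W_*=\*w_*)=P^{\M_i}(\*W_*=\*w_*)$ for that completion, and the inequality survives.

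I expect the main obstacle to be exactly this amplification step: ruling out the possibility that $P^{\M_1}$ and $P^{\M_2}$ differ on individual completions $\*w_*$ but agree after marginalizing out $\*w_*\setminus\*y_*$, while \emph{simultaneously} keeping both models consistent with all experimental distributions $\{P_{\*z_j}(\*V)\}$. The cleanest route is to take the given non-identifying pair, restrict attention (via the c-component factorization) to the offending factor, and then modify the non-$\*Y_*$ mechanisms inside that c-component to be functionally determined by their counterfactual parents — this does not change any $P_{\*z_j}(\*V)$ because those quantities only constrain the joint law of $\*V$ under interventions, and we are only redistributing probability mass among exogenous strata in a way invisible to the data — thereby collapsing the marginal sum to a single term and preserving the strict inequality. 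A minor bookkeeping point, which I would handle at the start, is the degenerate case $\*Y_*=\*W_*$, where the statement is a tautology.
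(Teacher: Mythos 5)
Your overall architecture matches what is needed: the direction ``$P(\*W_*=\*w_*)$ identifiable $\Rightarrow$ the sum identifiable'' is immediate from marginalization, and all the substance is in propagating non-identifiability of the \ctffactor{} to the marginal, where the danger is that the disagreement between the two witnessing models cancels under $\sum_{\*w_*\setminus\*y_*}$. You have correctly isolated that crux, and invoking \cref{thm:cc-comp-factorization} to localize the disagreement to a single c-component factor is a sensible first move.

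However, the step you propose to defeat cancellation does not work as stated. You claim that replacing the mechanisms of the variables in $\*W_*\setminus\*Y_*$ by deterministic functions of their parents ``does not change any $P_{\*z_j}(\*V)$ because \ldots we are only redistributing probability mass among exogenous strata.'' These are two different operations: redistributing mass over $\*U$ keeps $\FF$ fixed, whereas determinizing $f_W$ changes $\FF$, and that in general changes $P(W\mid \Pai{W})$ and hence every $P_{\*z_j}(\*V)$ with $W\notin \*Z_j$; the modified pair need not agree on $\Z$ any longer, nor need it still disagree on the factor. Moreover, the variables in $\*W_*\setminus\*Y_*$ are \emph{ancestors} of $\*Y_*$, not descendants, so even after determinization their values are not pinned down by $\*y_*$; collapsing $\sum_{\*w_*\setminus\*y_*}$ to a single term would force the root-most of them to be constants, i.e., a degenerate observational distribution, which is incompatible with matching a non-degenerate input in $\Z$. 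So the cancellation problem remains open in your argument. A correct proof has to rule out cancellation concretely --- e.g., by working with the explicit counterexample models that witness non-identifiability of the offending c-component factor (the parity-type constructions underlying \cref{thm:consistent-zid} and the surrogate-experiment completeness results it relies on), in which the disagreement is concentrated on specific value configurations so that one can check directly that the marginal over $\*w_*\setminus\*y_*$ still differs. As written, your key step is asserted rather than established.
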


Once the query of interest is in \ctffactor{}-form, the identification task reduces to identifying smaller \ctffactors{} according to the c-components of $\G$. In this respect, \cref{thm:cc-comp-factorization} implies the following
\begin{corollary}\label{cor:non-id-comp-non-id-factor}
Let $P(\*W_* = \*w_*)$ be a \ctffactor{} and $\*C_j$ be a c-component of $\G[\*V(\*W_*)]$. Then, if $P(\*C_{j*}=\*c_{j*})$ is not identifiable, $P(\*W_* = \*w_{*})$ is also not identifiable.
\end{corollary}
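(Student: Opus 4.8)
The plan is to prove the contrapositive: assuming the \ctffactor{} $P(\*W_* = \*w_*)$ is identifiable from $\Z$ in $\G$ — read, as these factorization results do, as identifiable for every value $\*w_*$ — I will show that each $P(\*C_{j*} = \*c_{j*})$ is then also identifiable from $\Z$ in $\G$. The corollary is exactly the contrapositive of this implication.

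First I would unwind the definition of identifiability. Let $\M_1$ and $\M_2$ be two SCMs that both induce $\G$ and that agree on every available distribution, that is, $P^{\M_1}_{\*z_j}(\*V) = P^{\M_2}_{\*z_j}(\*V)$ for all $\*z_j \in \Val{\*Z_j}$ and all $\*Z_j \in \Z$. By the assumed identifiability of the \ctffactor{}, it follows that $P^{\M_1}(\*W_* = \*w_*) = P^{\M_2}(\*W_* = \*w_*)$ for every value $\*w_*$.

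The one substantive step is to invoke \cref{eq:cc-comp-comp} of \cref{thm:cc-comp-factorization}, which exhibits $P(\*C_{j*} = \*c_{j*})$ as a product of ratios whose numerators and denominators are each a sum of entries of the \ctffactor{} $P(\*W_* = \*w_*)$. The index sets of those sums, the topological order $W_1 < W_2 < \cdots$ on $\*V(\*W_*)$, and the c-component $\*C_j$ of $\G[\*V(\*W_*)]$ depend on $\G$ alone, not on the model; hence $P(\*C_{j*} = \*c_{j*})$ is computed from the tuple $(P(\*W_* = \*w_*))_{\*w_*}$ by one and the same map in $\M_1$ and in $\M_2$. Feeding the coordinatewise-equal tuples through that map gives $P^{\M_1}(\*C_{j*} = \*c_{j*}) = P^{\M_2}(\*C_{j*} = \*c_{j*})$, and since $\M_1, \M_2$ were arbitrary among models inducing $\G$ that agree on $\{P_{\*z_j}(\*V)\}$, the quantity $P(\*C_{j*} = \*c_{j*})$ is uniquely computable from the available data, i.e.\ identifiable from $\Z$ in $\G$. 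Taking the contrapositive closes the argument.

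No deep obstacle arises here — the corollary is essentially a direct reading of \cref{thm:cc-comp-factorization}, so the only care needed is bookkeeping. One point is the quantifier: the formula for a fixed $\*c_{j*}$ references only those entries of $P(\*W_*)$ consistent with the fixed coordinates of $\*c_{j*}$, so strictly one needs identifiability of $P(\*W_* = \*w_*)$ just on that subset of values; I would adopt the convention, already implicit above, that calling a \ctffactor{} identifiable means all of its entries are. The other is the degenerate case of a vanishing denominator in the ratios of \cref{thm:cc-comp-factorization}: there the numerator is a sum over a subset of the same values and hence also vanishes, and the factor is recovered through the same convention already fixed for that theorem, so nothing new is required. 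In short, the proof amounts to the observation that identifiability is preserved under the fixed, model-independent computation exhibited in \cref{thm:cc-comp-factorization}.
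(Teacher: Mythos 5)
Your proof is correct and follows essentially the same route as the paper: both arguments rest on the fact that \cref{thm:cc-comp-factorization} (specifically \cref{eq:cc-comp-comp}) expresses each $P(\*C_{j*}=\*c_{j*})$ as a fixed, model-independent function of the entries of $P(\*W_*=\*w_*)$, so identifiability of the latter transfers to the former. The paper phrases this as a one-line proof by contradiction while you spell out the contrapositive with the two-model definition of identifiability; the content is the same.
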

\begin{proof}
Assume for the sake of contradiction that $P(\*C_{j*}=\*c_{j*})$ is not identifiable but $P(\*W_* = \*w_{*})$ is. Then, by \cref{thm:cc-comp-factorization}, the former is identifiable from the latter, a contradiction.
\end{proof}

Let us consider the causal diagrams in \cref{fig:ex-ctf-measures} and the counterfactual $Y_{x_1,W_{x_0}}=y, X=x$, with $x_0, x_1, x \in \Val{X}$, used to define quantities for fairness analysis in \cite{zhang:bar18a} (e.g., $Y_{x_1,W_{x_0}}=y | X=x$): 
\begin{align}
    &\hspace{-2em}P(Y_{x_1,W_{x_0}}=y, X=x) \nonumber\\
    &=\sum\nolimits_{w}P(Y_{x_1,w}=y, W_{x_0}=w, X=x) &\text{Unnesting}\\
    &=\sum\nolimits_{w, z}P(Y_{x_1,w}=y, W_{x_0}=w, X=x, Z=z) & \text{Complete ancestral set}\\
    &=\sum\nolimits_{w, z}P(Y_{x_1,w,z}=y, W_{x_0}=w, X_{z}=x, Z=z) & \text{Write in \ctffactor{}-form}
\end{align}
Due to the particular c-component structure of each model, we can factorize $P(Y_{x_1,w,z}=y, W_{x_0}=w, X_{z}=x, Z=z)$ according to each model as:
\begin{align}
    &P(Y_{x_1,w,z}=y)P(W_{x_0}=w)P(X_{z}=x)P(Z=z), \label{eq:factors:1}\\
    &P(Y_{x_1,w,z}=y, Z=z)P(W_{x_0}=w)P(X_{z}=x), \text{ and}\label{eq:factors:2}\\
    &P(Y_{x_1,w,z}=y)P(W_{x_0}=w, X_{z}=x)P(Z=z).\label{eq:factors:3}
\end{align}
The question then becomes, whether \ctffactors{} corresponding to individual c-components can be identified from the available input. In this example, all factors in \cref{eq:factors:1} and \cref{eq:factors:2} are identifiable from $P(\*V)$. For \cref{eq:factors:2} in particular, they are given by
\begin{align}
    P(Y=y, Z=z \mid W=w,X=x_1)P(W=w \mid X=x_0)P(X=x \mid Z=z).
\end{align}
In contrast, the factor $P(W_{x_0}{=}w, X_{z}{=}x)$ in \cref{eq:factors:3} (model \cref{fig:ex-ctf-measures:c}) is only identifiable if $x{=}x_0$.
The following definition and theorem characterize the factors that can be identified from $\Z$ and $\G$.

\begin{figure}
    \centering
    \begin{subfigure}[b]{0.29\columnwidth}
		\centering
		\begin{tikzpicture}[SCM,scale=0.9]
    	   	\node (Z) at (1.25,0.7) {$Z$};
    	   	\node (X) at (0,0) {$X$};
    	   	\node (Y) at (2.5,0) {$Y$};

    	    \path [->] (X) edge (Z);
    	    \path [->] (Z) edge (Y);
    	    \path [->] (X) edge (Y);
    	    \path [<->] (X) edge [dashed, bend left=30] (Z);
		\end{tikzpicture}
		\caption{The factor $P(Z_x=z)$ is identifiable only if $\{X\} \in \Z$.}
		\label{fig:nde-confounded}
	\end{subfigure}
	\hfill
	\begin{subfigure}[b]{0.31\columnwidth}
		\centering
		\begin{tikzpicture}[SCM,scale=0.9]
    	   	\node (X) at (0,0) {$X$};
    	   	\node (Y) at (2.5,0) {$Y$};

    	    \path [->] (X) edge (Y);
    	    \path [<->] (X) edge [dashed, bend left=40] (Y);
		\end{tikzpicture}
		\caption{The factor $P(Y_x=y, X=x')$ is inconsistent.}
		\label{fig:bow}
	\end{subfigure}
	\hfill
	\begin{subfigure}[b]{0.33\columnwidth}
		\centering
		\begin{tikzpicture}[SCM,scale=0.9]
    	   	\node (X1) at (0,0) {$X$};
    	   	\node (W) at (1.2,0) {$W$};
    	   	\node (X2) at (2.4,0) {$Z$};
    	   	\node (Y) at (3.6,0) {$Y$};
    	
    	    \path [->] (X1) edge (W);
    	    \path [->] (W) edge (X2);
    	    \path [->] (X2) edge (Y);
    	    \path [->] (W) edge[bend right=30] (Y);
		\end{tikzpicture}
		\caption{The factor $P(W_x=w,W_{x'}=w')$ is inconsistent.}
		\label{fig:markov-non-id-ett}
	\end{subfigure}
	\caption{Examples of causal diagrams and inconsistent \ctffactors{} derived from them.}
\end{figure}
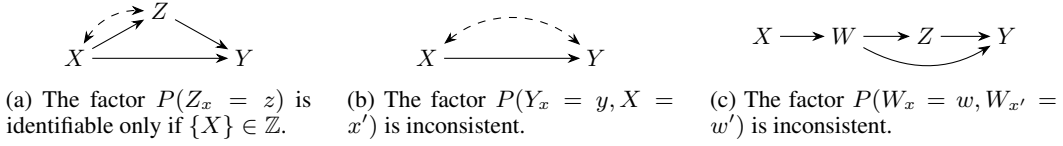

\begin{definition}[Inconsistent \ctffactor{}]\label{def:inconsistent-factor}
$P(\*W_* =\*w_*)$ is an inconsistent \ctffactor{} if it is a \ctffactor{}, $\G[\*V(\*W_*)]$ has a single c-component, and one of the following situations hold:
\begin{enumerate}[label={(\roman*)},nolistsep,nosep,leftmargin=2em,itemsep=0.2em]
    \item there exist $W_\*t \in \*W_*, Z \in \*T \cap \*V(\*W_*)$ such that $z \in \*t, z' \in \*w_*$ and $z \neq z'$, or 
    \item there exists $W_{i[\*t_i]}, W_{j[\*t_j]} \in \*W_*$ 
    and $T \in \*T_i \cap \*T_j$ such that $t \in \*t_1, t' \in \*t_2$ and $t \neq t'$.
\end{enumerate}
\end{definition}

\begin{restatable}[\Ctffactor{} identifiability]{theorem}{thminconsistentid}\label{thm:consistent-zid}
A \ctffactor{} $P(\*W_* =\*w)$ is identifiable from $\Z$ if and only if it is consistent. If consistent, let $\*W=\*V(\*W_{*})$ and $\*W' = \*V \setminus \*W$; then $P(\*W_* = \*w_*)$ is equal to $P_{\*w'}(\*w)$ where $\*w$ and $\*w'$ are consistent with $\*w_{*} \cup \bigcup_{\{W_{\pai{w}} \in \*W_{*}\}}\pai{w}$.
\end{restatable}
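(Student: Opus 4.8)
The plan is to prove the two directions separately, reducing each to a \ctffactor{} whose variable set $\*W := \*V(\*W_*)$ forms a single c-component. By \cref{thm:cc-comp-factorization} an arbitrary \ctffactor{} $P(\*W_* = \*w_*)$ factors into \ctffactors{} over the c-components $\*C_1,\dots,\*C_k$ of $\G[\*V(\*W_*)]$, so each $\G[\*C_j]$ is a single c-component; by \cref{thm:cc-comp-factorization} and \cref{cor:non-id-comp-non-id-factor} the original is identifiable from $\Z$ exactly when every such factor is; and it is consistent precisely when every such factor is (\cref{def:inconsistent-factor}). For the \emph{if} direction I would first prove the stated identity at the level of structural models. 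A \ctffactor{} has the form $P\!\left(\bigwedge_{W\in\*W} W_{\pai{w}} = w\right)$ with each $W$ intervened on \emph{all} of its parents $\Pa{W}$; let $\*W' = \*V \setminus \*W$ and let $\*w'$ be the assignment to $\*W'$ that, on the external parents $\Pa{\*W}\setminus\*W$, agrees with the values the subscripts $\pai{w}$ give those variables --- unambiguous because the factor is consistent --- and is arbitrary on the remaining variables of $\*W'$. The crux is that for every unit $\*u$ the event $\bigwedge_{W\in\*W} W_{\pai{w}}(\*u) = w$ holds if and only if every $W\in\*W$ evaluates to $w$ in the submodel $\M_{\*w'}$. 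This is established by induction along a topological order of $\G[\*W]$: in $\M_{\*w'}$ the value of $W$ is $f_W$ applied to the values of $\Pa{W}$, which are the $\*w'$-values for parents in $\*W'$ and --- granting the event holds, by the induction hypothesis --- the $\*w_*$-values for parents in $\*W$; consistency (failure of conditions (i) and (ii) of \cref{def:inconsistent-factor}) is exactly what forces this parent-value vector to coincide with $\pai{w}$, whence $W = f_W(\pai{w},\*u) = W_{\pai{w}}(\*u)$ and the two events agree term by term. Summing $P(\*u)$ over the common set of units via \cref{eq:prob-ctf} yields $P(\*W_* = \*w_*) = P_{\*w'}(\*W = \*w) = P_{\*w'}(\*w)$; since $\*W$ is a single c-component of $\G_{\overline{\*W'}}$, the identifiability of this marginal interventional quantity from $\Z$ is just the g-identifiability of a c-factor, governed by \cite{tian:pea02testable-implications,lee:etal19}, which closes the direction.

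For the \emph{only if} direction, with $\G[\*W]$ a single c-component and one of conditions (i), (ii) of \cref{def:inconsistent-factor} in force, I would show the factor is undetermined even by the whole of layers 1 and 2, hence \emph{a fortiori} not identifiable from $\Z$. In case (ii) some parent $T$ is forced to two distinct values across the conjunction --- possibly with $W_i = W_j$, i.e.\ one variable appearing in two contradictory worlds --- while in case (i) one copy of a variable $Z$ takes its natural value $z'$ but another appearance fixes it to $z\neq z'$; because $\G[\*W]$ is a single c-component, the two conflicting worlds are coupled either through a bidirected path or, when $W_i = W_j$, through the shared exogenous variable of that mechanism. The construction is the classical one: put a canonical (response-function) parameterization on this offending sub-structure, make all mechanisms outside it deterministic and inert, and choose two distributions over the offending exogenous variable that (a) both induce $\G$, (b) agree on every $P_{\*z_j}(\*V)$ in $\Z$ --- indeed on every interventional distribution, which sees the offending latent only through its marginal effect on single worlds --- yet (c) disagree on the targeted cross-world cell. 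The templates are $P(Y_x{=}y, X{=}x')$ with $x\neq x'$ of \cref{fig:bow} for (i) and $P(W_x{=}w, W_{x'}{=}w')$ with $x\neq x'$ of \cref{fig:markov-non-id-ett} for (ii); the remaining vertices of $\G$ are padded with inert (constant or copy) mechanisms so that the diagram and all induced distributions are preserved between the two models.

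The step I expect to be the main obstacle is this last construction in full generality: producing the two models for an \emph{arbitrary} single-c-component subgraph rather than for the bow/fork templates, and verifying they coincide on \emph{all} interventional distributions while differing on the chosen counterfactual event. The delicate points are threading the ``two worlds'' through the entire bidirected-connected component so that no interventional marginal detects the coupling, choosing the canonical parameters by a dimension count that pins down every layer-1 and layer-2 constraint yet still leaves a free direction in the cross-world cell, and checking that the inert padding of $\*V\setminus\*W$ really has no effect. The \emph{if} direction, by contrast, is largely bookkeeping once the event-coincidence claim is in place, its only subtlety being to confirm that consistency is exactly the hypothesis that makes the induction go through.
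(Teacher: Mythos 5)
Your architecture matches the paper's: reduce to single-c-component factors via \cref{thm:cc-comp-factorization} and \cref{cor:non-id-comp-non-id-factor}, prove the identity $P(\*W_*=\*w_*)=P_{\*w'}(\*w)$ by a unit-level argument for sufficiency, and exhibit two models that agree on the available distributions but disagree on the factor for necessity. The sufficiency half is essentially complete: the topological induction showing that, for each unit $\*u$, the conjunctive counterfactual event coincides with the event $\*W=\*w$ in the submodel $\M_{\*w'}$ is the right mechanism, and you correctly isolate that failure of condition (i) of \cref{def:inconsistent-factor} is what makes the internal parents' realized values agree with the subscripts $\pai{w}$, while failure of condition (ii) is what makes $\*w'$ well defined on $\Pa{\*W}\setminus\*W$. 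You are also right to read the identifiability claim as a reduction to the layer-2 c-factor $Q[\*W]=P_{\*w'}(\*w)$, whose identifiability from $\Z$ is then the gID question that \cref{alg:ctfID} delegates to \textsc{Identify}; taken literally, ``consistent $\Rightarrow$ identifiable from $\Z$'' would already fail for $P(Y_x=y)$ in \cref{fig:bow} with $\Z=\{\emptyset\}$, so your reading is the only tenable one.

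The genuine gap is exactly where you flag it: the necessity direction. Citing the templates of \cref{fig:bow} and \cref{fig:markov-non-id-ett} and asserting that the remaining vertices can be ``padded with inert mechanisms'' is a plan, not a proof, for an arbitrary inconsistent \ctffactor{}. What remains to be done is, first, handling a violation witnessed by $W_{i[\*t_i]}$ and $W_{j[\*t_j]}$ (or by $W_{\*t}$ and $Z$) that are connected only by a long bidirected path through other members of the c-component: the cross-world coupling has to be threaded through the chain of shared exogenous variables so that the joint over the two conflicting worlds retains a free direction while every single-world margin---hence every interventional distribution, not merely those in $\Z$---is pinned down. Second, one must check that the two constructed models also agree on distributions under interventions that cut \emph{into} the offending component itself (e.g., $P_{z}(\*V)$ for $Z\in\*W$), which is where naive parameterizations are typically detected. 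A degrees-of-freedom count over a canonical response-function parameterization does succeed here, but it must be carried out over the whole bidirected-connected $\G[\*V(\*W_*)]$ rather than over the two-node templates, and the ``inert padding'' of $\*V\setminus\*W$ must be verified not to re-identify the cross-world cell through descendants. Until that construction is written down and checked, the ``only if'' direction is unproven.
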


Consider the $\textit{NDE}_{x \to x',Z}(Y)$ in \cref{fig:nde-confounded}, we can write
\begin{align}
    P(Y_{x'Z_x}=y)=\sum\nolimits_{z}P(Y_{x'z}=y, Z_x=z)=\sum\nolimits_{z}P(Y_{x'z}=y)P(Z_x=z).
\end{align}
While the factor $P(Y_{x'z}=y)$ is identifiable from $P(\*V)$ as $P(Y=y \mid X=x', Z=z)$, the second factor is identifiable only if experimental data on $X$ is available, as $P_x(z)$.

We can also verify that the factor $P(Y_x = y, X=x')$ in \cref{fig:bow} is inconsistent. For another example consider the ETT-like expression $P(Y_{x,z}=y, X=x', Z=z')$ in \cref{fig:markov-non-id-ett}, we have
\begin{align}
    P(Y_{xz}=y&, X=x', Z=z') \nonumber\\
    &=\sum\nolimits_{w,w'}P(Y_{xz}=y, X=x', Z=z', W_x=w, W=w')\\
    &=\sum\nolimits_{w,w'}P(Y_{xz}=y, X=x', Z_{w'}=z', W_x=w, W_{x'}=w')\\
    &=\sum\nolimits_{w,w'}P(Y_{xz}=y)P(X=x')P( Z_{w'}=z')P(W_x=w, W_{x'}=w'),
\end{align}
where the factor $P(W_x=w, W_{x'}=w')$ is inconsistent.

Using the results in this section, we propose the algorithm \textsc{ctfID} (\cref{alg:ctfID}) which given a set of counterfactual variables $\*Y_*$, corresponding values $\*y_*$, a collection of observational and experimental distributions $\Z$, and a causal diagram $\G$; outputs an expression for $P(\*Y_*=\*y_*)$ in terms of the specified distributions or \textsc{Fail} if the query is not identifiable from such input in $\G$. Line~\ref{alg:ctfid:simp} removes irrelevant subscripts from the query by virtue of \cref{lem:int-minimal}. Then, lines~\ref{alg:ctfid:check0} and \ref{alg:ctfid:check-red} look for inconsistent events and redundant events, respectively. Line~\ref{alg:ctfid:defW} finds the relevant \ctffactors{} consisting of a single c-component, as licensed by \cref{thm:ancestral-ctf-comp} and \cref{thm:cc-comp-factorization}. As long as the factors are consistent, and allowed by \cref{thm:consistent-zid}, lines~\ref{alg:ctfid:Bi}-\ref{alg:ctfid:endifidentify} carry out identification of the causal effect $P_{\*V \setminus \*C_i}(\*C_i)$ from the available distributions employing the algorithm \textsc{Identify} \cite{tian:pea02-general-id} as a subroutine.\footnote{For a running example of the \textsc{ctrID} and details on how to use \textsc{Identify}, see Appendix E.} 
The procedure fails if any of the factors $P(\*C_{i*}=\*c_{i*})$ is inconsistent or not identifiable from $\Z$. Otherwise, it returns the corresponding expression.

\begin{algorithm}[tb]
\caption{\textsc{ctfID}$(\*Y_*, \*y_*, \bm{\Z}, \G$)}
\label{alg:ctfID}

\textbf{Input}: $\G$ causal diagram over variables $\*V$; $\*Y_*$ a set of counterfactual variables in $\*V$; $\*y_*$ a set of values for $\*Y_*$; and available distribution specification $\Z$.

\textbf{Output}: $P(\*Y_* = \*y_*)$ in terms of available distributions or \textsc{Fail} if not identifiable from $\langle \G, \Z \rangle$.

\begin{algorithmic}[1]
\STATE let $\*Y_* \gets \im{\*Y_*}$.\label{alg:ctfid:simp}
\IFTHEN {there exists $Y_{\*x} \in \*Y_*$ with two or more different values in $\*y_*(Y_{\*x})$ or $Y_{y} \in \*Y_*$ with $\*y_*(Y_y) \neq y$ } { \textbf{return} 0. } \label{alg:ctfid:check0}
\IFTHEN {there exists $Y_{\*x} \in \*Y_*$ with two consistent values in $\*y_*(Y_{\*x})$ or $Y_{y} \in \*Y_*$ with $\*y_*(Y_y) = y$ } { remove repeated variables from $\*Y_*$ and values $\*y_*$. } \label{alg:ctfid:check-red}
\STATE let $\*W_* \gets \An{\*Y_*}$, 
    and let $\*C_{1*}, \ldots, \*C_{k*}$ be corresponding \ctffactors{} in $\G[\*V(\*W_*)]$. \label{alg:ctfid:defW}
\FOR{\textbf{each} $\*C_i$ s.t. $(\*C_{i*}=\*c_{i*})$ is not inconsistent, $\*Z \in \Z$ s.t. $\*C_i \cap \*Z = \emptyset$}\label{alg:ctfid:forloop1} \label{alg:ctfid:forQNX}
    \STATE let $\*B_i$ be the c-component of $\G_{\overline{\*Z}}$ such that $\*C_i \subseteq \*B_i$, compute $P_{\*V \setminus \*B_i}(\*B_i)$ from $P_{\*Z}(\*V)$.\label{alg:ctfid:Bi}
    \IF {$\textsc{Identify}(\*C_i, \*B_i, P_{\*V \setminus \*B_i}(\*B_i), \G)$ does not \textsc{Fail}}\label{alg:ctfid:callIdentify}
        \STATE let $P_{\*V \setminus \*C_i}(\*C_i) \gets \textsc{Identify}(\*C_i, \*B_i, P_{\*V \setminus \*B_i}(\*B_i), \G)$.
        \STATE let $P(\*C_{i*}=\*c_{i*}) \gets \left[P_{\*V \setminus \*C_i}(\*C_i)\right]_{(\*c_{i*} \cup \bigcup_{C_{\*t} \in \*C_{i*}}\pai{c})}$.
        \STATE move to the next $\*C_i$. \label{alg:ctfid:nextci}
    \ENDIF \label{alg:ctfid:endifidentify}
\ENDFOR
\IFTHEN {any $P(\*C_{i*}=\*c_{i*})$ is inconsistent or was not identified from $\Z$} {\textbf{return} \textsc{Fail}}.\label{alg:ctfid:fail}
\STATE \textbf{return} $P(\*Y_*=\*y_*) \gets \sum_{\*w_* \setminus \*y_*}\prod_i P(\*C_{i*}=\*c_{i*})$.\label{alg:ctfid:return}
\end{algorithmic}
\end{algorithm}

\begin{restatable}[\textsc{ctfID} completeness]{theorem}{ctfidcompleteness}\label{thm:ctfid-completeness}
A counterfactual probability $P(\*Y_*=\*y_*)$ is identifiable from $\Z$ and $\G$ if and only if \textsc{ctfID} returns an expression for it.
\end{restatable}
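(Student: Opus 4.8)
The plan is to prove the two directions of the biconditional separately, leveraging the structural results developed in \cref{sec:basics} and the identifiability characterization in \cref{thm:consistent-zid}. The soundness direction ("if \textsc{ctfID} returns an expression, then the query is identifiable") is the easier half: I would verify that each transformation performed by the algorithm is a valid equality that holds in every SCM inducing $\G$. Concretely, line~\ref{alg:ctfid:simp} is justified by \cref{lem:int-minimal}; lines~\ref{alg:ctfid:check0} and \ref{alg:ctfid:check-red} handle the trivial inconsistent/redundant events noted after \cref{lem:int-minimal}; the passage to the ancestral set $\*W_* = \An{\*Y_*}$ and its \ctffactor{}-form in line~\ref{alg:ctfid:defW} is \cref{thm:ancestral-ctf-comp} (together with the marginalization $P(\*Y_*=\*y_*)=\sum_{\*w_*\setminus\*y_*}P(\*W_*=\*w_*)$), and the product factorization across c-components is \cref{thm:cc-comp-factorization}. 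Finally, the inner loop (lines~\ref{alg:ctfid:Bi}--\ref{alg:ctfid:endifidentify}) computes each single-c-component factor $P(\*C_{i*}=\*c_{i*})$ as a causal effect $P_{\*V\setminus\*C_i}(\*C_i)$ evaluated at the appropriate values, which is exactly the expression guaranteed by \cref{thm:consistent-zid}; the correctness of obtaining that effect from the available $\Z$ via $\*B_i$ and \textsc{Identify} is the soundness of \cite{tian:pea02-general-id,lee:etal19}. Chaining these equalities shows the returned expression equals $P(\*Y_*=\*y_*)$ and is written purely in terms of $\{P_{\*z_j}(\*V)\}$, so the query is identifiable.

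For the completeness direction ("if the query is identifiable, then \textsc{ctfID} returns an expression"), I would argue the contrapositive: whenever \textsc{ctfID} returns \textsc{Fail}, the query is not identifiable. The algorithm fails only at line~\ref{alg:ctfid:fail}, which happens in one of two ways. First, if some factor $P(\*C_{i*}=\*c_{i*})$ is an inconsistent \ctffactor{} in the sense of \cref{def:inconsistent-factor}: by \cref{thm:consistent-zid} such a factor is not identifiable from any $\Z$, and then by \cref{cor:non-id-comp-non-id-factor} the enclosing \ctffactor{} $P(\*W_*=\*w_*)$ is not identifiable, and by \cref{lem:factor-non-id-sum-non-id} neither is $\sum_{\*w_*\setminus\*y_*}P(\*W_*=\*w_*) = P(\*Y_*=\*y_*)$. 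Second, if the factor is consistent but \textsc{Identify} fails to obtain $P_{\*V\setminus\*C_i}(\*C_i)$ from the available distributions: here I invoke the completeness of the underlying effect-identification machinery --- the g-identification result of \cite{lee:etal19} --- to conclude that this causal effect genuinely is not $\Z$-identifiable in $\G$; then again \cref{cor:non-id-comp-non-id-factor} and \cref{lem:factor-non-id-sum-non-id} propagate non-identifiability up to $P(\*Y_*=\*y_*)$. I also need to note that the trivial early returns (lines~\ref{alg:ctfid:check0}--\ref{alg:ctfid:check-red}) never cause an incorrect \textsc{Fail}, since they only return $0$ or simplify, never \textsc{Fail}.

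The main obstacle, and the step that needs the most care, is making the completeness direction fully rigorous: one must show that the particular decomposition into single-c-component \ctffactors{} chosen by the algorithm is \emph{canonical} enough that failure of any one piece really does obstruct the whole query --- i.e., there is no alternative derivation that could identify $P(\*Y_*=\*y_*)$ while avoiding the offending factor. This relies on \cref{lem:factor-non-id-sum-non-id} being an "if and only if" (so summing out does not create identifiability) and on \cref{thm:cc-comp-factorization} showing each c-component factor is a deterministic function of the whole \ctffactor{} (so identifiability of the whole forces identifiability of each part); I would state these dependencies explicitly and also confirm that the ancestral-set reduction in \cref{thm:ancestral-ctf-comp} is the unique minimal form, so that the $\*C_{i*}$ are uniquely determined by $\*Y_*$ and $\G$. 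A secondary technical point is verifying that the map from "natural intervention in the original query" through unnesting (\cref{thm:counterfactualunnesting}) down to \ctffactor{}-form is handled — but since the theorem statement concerns a query $P(\*Y_*=\*y_*)$ already given as a counterfactual probability and \textsc{ctfID} operates on it directly, I would note that nested queries are first reduced via \cref{thm:counterfactualunnesting} as discussed at the end of \cref{sec:basics}, so without loss of generality the input is unnested.
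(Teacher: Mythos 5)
Your proposal is correct and follows essentially the same route as the paper: soundness by chaining \cref{lem:int-minimal}, \cref{thm:ancestral-ctf-comp}, \cref{thm:cc-comp-factorization}, and \cref{thm:consistent-zid} together with the soundness of \textsc{Identify}, and completeness by propagating non-identifiability of a single c-component factor (either inconsistent, or a consistent factor whose causal effect is not $\Z$-identifiable by the completeness of g-identification) back up through \cref{cor:non-id-comp-non-id-factor} and the ``only if'' direction of \cref{lem:factor-non-id-sum-non-id}. The obstacle you flag --- that the algorithm's decomposition must be canonical enough that failure of one piece obstructs every derivation --- is exactly what those ``if and only if'' lemmas are designed to discharge, so no further argument is needed beyond invoking them as you do.
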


\section{Identification of Conditional Counterfactuals}\label{sec:conditional}
In this section we consider counterfactual quantities of the form $P(\*Y_*=\*y_* \mid \*X_*=\*x_*)$. It is immediate to write such a query as $P(\*W_*=\*w_*)$ with $\*W_*=\*Y_* \cup \*X_*$, and try to identify it using \textsc{ctfID}. Nevertheless, depending on the graphical structure, the original query may be identifiable even if the latter is not. To witness, consider the causal diagram in \cref{fig:cond-id-marginal-non-id} and the counterfactual $P(Y_x=y \mid Z_x=z, X=x')$, which can be written as $P(Y_x=y, Z_x=z, X=x')/\sum_{y}P(Y_x=y, Z_x=z, X=x')$.
Following the strategy explained so far, the numerator is equal to $P(Y_z=y)P(Z_x=z, X=x')$, where the second \ctffactor{} is inconsistent, and therefore not identifiable from $\Z$. Nevertheless, the conditional query is identifiable as
\begin{align}
    \frac{
        P(Y_{xz}=y)P(Z_x=z, X=x')
    }{
        P(Z_x=z, X=x')\sum_{y}P(Y_{xz}=y)
    }
    = P(Y_{xz}=y) = P(Y=y \mid Z=z, X=x).
\end{align}
\begin{figure}
    \centering
    \begin{subfigure}[b]{0.48\columnwidth}
		\centering
		\begin{tikzpicture}[SCM,scale=0.9]
    	   	\node (X) at (0,0) {$X$};
    	   	\node (Z) at (1.85,0) {$Z$};
    	   	\node (Y) at (3.5,0) {$Y$};

    	    \path [->] (X) edge (Z);
    	    \path [->] (Z) edge (Y);
    	    \path [->] (X) edge [bend right=20] (Y);
    	    \path [<->] (X) edge [dashed, bend left=40] (Z);
    	    
		\end{tikzpicture}
		\caption{While $P(Y_x{=}y \mid Z_x{=}z, X{=}x')$ is identifiable from $\Z$, $P(Y_x{=}y, Z_x{=}z, X{=}x')$ is not.}
		\label{fig:cond-id-marginal-non-id}
	\end{subfigure}
	\hfill
	\begin{subfigure}[b]{0.48\columnwidth}
		\centering
		\begin{tikzpicture}[SCM,scale=0.9]
    	   	\node (X) at (0,0) {$X$};
    	   	\node (Z) at (1.85,0) {$Z$};
    	   	\node (Y) at (3.5,0) {$Y$};

    	    \path [->] (X) edge (Z);
    	    \path [->] (Z) edge (Y);
    	    \path [->] (X) edge [bend right=20] (Y);
    	    \path [<->] (X) edge [dashed, bend left=20] (Y);
		\end{tikzpicture}
		\caption{$P(Y_x{=}y \mid Z_x{=}z, X{=}x')$ is not identifiable from $\Z$ because of the factor $P(Y_{xz}{=}y, X{=}x')$.}
		\label{fig:cond-non-id}
	\end{subfigure}
    \caption{Examples of conditional queries.}
\end{figure}
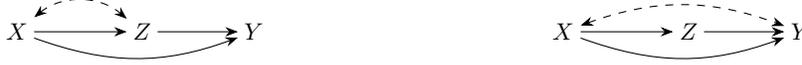
To characterize such simplifications of the query, we look at the causal diagram paying special attention to variables after the conditioning bar that are also ancestors of those before. Let $\*X_*(W_{\*t})=\*V(\im{\*X_*} \cap \An{W_{\*t}})$, that is, the primitive variables in $\*X_*$ that are also ancestors of $\*W_{\*t}$.
\begin{definition}[Ancestral components]\label{def:ancestral-components}
Let $\*W_*$ be a set of counterfactual variables, $\*X_* \subseteq \*W_*$, and $\G$ be a causal diagram. 
Then the ancestral components induced by $\*W_*$, given $\*X_*$, are sets $\*A_{1*}, \*A_{2*}, \ldots$ that form a partition over $\An{\*W_*}$, made of unions of ancestral the sets $\An[\G_{\underline{\*X_*(W_{\*t})}}]{W_{\*t}}, W_{\*t} \in \*W_*$. Sets $\An[\G_{\underline{\*X_*(W_{1[\*t_1]})}}]{W_{1[\*t_1]}}$ and $\An[\G_{\underline{\*X_*(W_{2[\*t_2]})}}]{W_{2[\*t_2]}}$ are put together if they are not disjoint or there exists a bidirected arrow in $\G$ connecting variables in those sets.
\end{definition}

\begin{restatable}{lemma}{thmqueryconditional}\label{lem:conditional}
Let $\*Y_*, \*X_*$ be two sets of counterfactual variables and let $\*D_*$ be the set of variables in the same ancestral component, given $\*X_*$, as any variable in $\*Y_*$, then
\begin{align}\label{eq:cond-ctf-factor}
    P(\*Y_*=\*y_* \mid \*X_*=\*x_*) = 
    \frac{\sum_{\*d_* \setminus (\*y_* \cup \*x_*)}P(\bigwedge_{D_\*t \in \*D_*}\*D_{\pai{d}} = d)}
    {\sum_{\*d_* \setminus \*x_*}P(\bigwedge_{D_\*t \in \*D_*}\*D_{\pai{d}} = d)},
\end{align}
where $\pai{d}$ is consistent with $\*t$ and $\*d_*$, for each $D_{\*t} \in \*D_{*}$.
Moreover, $P(\*Y_*=\*y_* \mid \*X_*=\*x_*)$ is identifiable from $\Z$ if and only if $P(\bigwedge_{D_\*t \in \*D_*}\*D_{\pai{d}} = d)$ is identifiable from $\Z$.
\end{restatable}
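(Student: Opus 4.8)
The plan is to establish the two halves of \cref{lem:conditional} separately: first the algebraic identity \cref{eq:cond-ctf-factor}, then the identifiability equivalence.

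\textbf{Deriving the identity.} I would start from the definition $P(\*Y_*=\*y_* \mid \*X_*=\*x_*) = P(\*Y_*=\*y_*, \*X_*=\*x_*)/P(\*X_*=\*x_*)$ and expand both numerator and denominator by summing over complete ancestral sets. For the numerator, $P(\*Y_*=\*y_*, \*X_*=\*x_*) = \sum_{\*a_* \setminus (\*y_* \cup \*x_*)} P(\*A_*=\*a_*)$ where $\*A_*=\An{\*Y_* \cup \*X_*}$; similarly for the denominator with $\*A_*'=\An{\*X_*}$. The key structural fact is that the ancestral components (\cref{def:ancestral-components}) partition $\An{\*Y_*\cup\*X_*}$ into blocks that are both ancestrally closed and closed under bidirected connection, so applying \cref{thm:ancestral-ctf-comp} to put things in \ctffactor-form and then \cref{thm:cc-comp-factorization} factorizes $P(\*A_*)$ across these components. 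The component $\*D_*$ containing the $\*Y_*$ variables is, by construction, disjoint (no shared variables, no bidirected edges) from all other components; hence the factors outside $\*D_*$ are identical in the numerator and denominator expansions (they involve only $\*X_*$-ancestors) and cancel. What remains is exactly \cref{eq:cond-ctf-factor}: the numerator summed down to $\*y_* \cup \*x_*$ and the denominator summed down to $\*x_*$, both over the single \ctffactor{} on $\*D_*$. I would need a small lemma that $\An{\*X_*}$ restricted to the block structure gives the ``other'' components unchanged — this follows because conditioning on $\*X_*$ only affects ancestral closure through the $\G_{\underline{\*X_*(W_{\*t})}}$ surgery, which by definition of $\*X_*(W_{\*t})$ cuts exactly the outgoing edges of $\*X_*$-primitives that feed into $W_{\*t}$.

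\textbf{The identifiability equivalence.} The ``if'' direction is immediate: if $P(\bigwedge_{D_\*t \in \*D_*}\*D_{\pai{d}}=d)$ is identifiable from $\Z$, then both numerator and denominator of \cref{eq:cond-ctf-factor} are sums of identifiable quantities, hence identifiable, and their ratio is well-defined wherever the conditional is (the conditioning event has positive probability by assumption), so the whole expression is identifiable. The ``only if'' direction is the main obstacle: I would argue by contrapositive, assuming the \ctffactor{} $P(\*D_{\pai{d}})$ on the ancestral component is \emph{not} identifiable from $\Z$. By \cref{thm:consistent-zid} this happens exactly when it is inconsistent, and I would need to show that inconsistency of this particular factor forces the conditional query to be non-identifiable. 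This requires the standard two-models construction: exhibit two SCMs $\M_1, \M_2$ agreeing on $\G$ and on all distributions in $\Z$ but disagreeing on $P(\*Y_*=\*y_* \mid \*X_*=\*x_*)$. The delicate point is that the inconsistency witness (a repeated variable forced to two different values in $\*D_*$, per \cref{def:inconsistent-factor}) may sit in the conditioning part $\*X_*$ rather than $\*Y_*$ — and we have already seen (\cref{fig:cond-id-marginal-non-id}) that inconsistency confined to the \emph{other} components does \emph{not} block identification. So the argument must use that the inconsistency lives in the block $\*D_*$ that genuinely mixes $\*Y_*$ and $\*X_*$ ancestors through a directed or bidirected path, meaning it cannot be quotiented away. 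I expect this to reduce to the completeness argument behind \cref{thm:ctfid-completeness}: the non-identifiability of an inconsistent \ctffactor{} built on a single c-component is witnessed by a hedge-like construction, and I would lift that witness to the conditional setting by checking that the ratio in \cref{eq:cond-ctf-factor} still differs between $\M_1$ and $\M_2$ — which holds precisely because the offending factor does not cancel between numerator and denominator when the inconsistency couples $\*Y_*$ with $\*X_*$.

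\textbf{Summary of step order.} (1) Expand numerator and denominator over ancestral sets and apply \cref{thm:ancestral-ctf-comp}; (2) invoke \cref{thm:cc-comp-factorization} with the ancestral-component partition and cancel the common non-$\*D_*$ factors to obtain \cref{eq:cond-ctf-factor}; (3) prove ``if'' by linearity of identifiability under sums and ratios; (4) prove ``only if'' via a two-model construction, using \cref{thm:consistent-zid} to characterize non-identifiability as inconsistency and the structure of $\*D_*$ to ensure the inconsistent factor survives in the ratio. The principal difficulty is step (4): making precise why an inconsistency internal to the mixed ancestral component cannot be cancelled in the quotient, in contrast to inconsistencies confined to the purely-$\*X_*$ components.
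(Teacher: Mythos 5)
Your overall architecture---write the conditional as a ratio, expand over an ancestral set, pass to \ctffactor{}-form via \cref{thm:ancestral-ctf-comp}, factorize via \cref{thm:cc-comp-factorization}, and cancel the blocks not containing $\*Y_*$---is the right one. But two steps are not actually carried through. First, in the cancellation step you assert that the non-$\*D_*$ factors ``are identical in the numerator and denominator and cancel,'' yet the product of sums only separates if no variable that is summed over \emph{outside} $\*D_*$ appears in an intervention subscript $\pai{d}$ of a variable \emph{inside} $\*D_*$. This is exactly what the surgery $\G_{\underline{\*X_*(W_{\*t})}}$ in \cref{def:ancestral-components} is engineered to guarantee: any parent of a $\*D_*$-variable lying outside $\*D_*$ must be a primitive of $\im{\*X_*}$, so its value in $\pai{d}$ is pinned to the conditioning assignment $\*x_*$ rather than summed. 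You invoke the surgery, but only to argue that the ``other'' components are unchanged; the load-bearing use---that $\sum\prod_j(\cdot)=\prod_j\sum(\cdot)$ actually holds across the block structure---is missing. (Relatedly, it is cleaner to obtain the denominator as $\sum_{\*y_*}$ of the numerator rather than expanding it separately over $\An{\*X_*}$, which forces you to reconcile two different ancestral closures.)

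Second, and more seriously, the ``only if'' direction is a plan rather than a proof. You appeal to \cref{thm:consistent-zid} to say that non-identifiability of $P(\bigwedge_{D_{\*t}\in\*D_*}\*D_{\pai{d}}=d)$ ``happens exactly when it is inconsistent,'' but $\*D_*$ is in general a union of several c-component factors, and even a consistent single-c-component factor can fail to be identifiable because the corresponding effect $P_{\*V\setminus\*C_i}(\*C_i)$ is not recoverable from $\Z$ (cf.\ \cref{fig:nde-confounded} and the \textsc{Identify} call in \cref{alg:ctfID}); the case analysis must cover both failure modes. More importantly, the crux---exhibiting two models that agree on every distribution in $\Z$ yet disagree on the \emph{ratio} in \cref{eq:cond-ctf-factor}, i.e., showing that the discrepancy in the offending factor does not cancel against the denominator---is precisely what you defer with ``I expect this to reduce to the completeness argument behind \cref{thm:ctfid-completeness}.'' That is the step where the hypothesis that the failure lives in the component genuinely coupling $\*Y_*$ with $\*X_*$ must be used (you correctly observe that failures confined to the purely-$\*X_*$ components do cancel), and as written the proposal identifies this difficulty without resolving it.
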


\begin{algorithm}[tb]
\caption{\textsc{cond-ctfID}$(\*Y_*, \*y_*, \*X_*, \*x_*, \bm{\Z}, \G$)}
\label{alg:condctfID}

\textbf{Input}: $\G$ causal diagram over variables $\*V$; $\*Y_*, \*X_*$ a set of counterfactual variables in $\*V$; $\*y_*, \*x_*$ a set of values for $\*Y_*$ and $\*X_*$; and available distribution specification $\Z$.

\textbf{Output}: $P(\*Y_* {=} \*y_* \mid \*X_* {=} \*x_*)$ in terms of available distributions or \textsc{Fail} if non-ID from $\langle \G, \Z \rangle$.

\begin{algorithmic}[1]
\STATE Let $\*A_{1*}, \*A_{2*}, \ldots$ be the ancestral components of $\*Y_* \cup \*X_*$ given $\*X_*$.\label{alg:condctfid:anccomp}
\STATE Let $\*D_*$ be the union of the ancestral components containing a variable in $\*Y_*$ and $\*d_*$ the corresponding set of values.\label{alg:condctfid:union}
\STATE let $Q \gets \textsc{ctfID}(\bigcup_{D_\*t \in \*D_*}\*D_{\pai{d}}, \*d_*, \Z, \G)$.\label{alg:condctfid:callctfid}
\RETURN $\sum_{\*d_* \setminus (\*y_* \cup \*x_*)}Q/\sum_{\*d_* \setminus \*x_*}Q$.
\end{algorithmic}
\end{algorithm}

Using the notion of ancestral components and \cref{lem:conditional}, we propose a conditional version of \textsc{ctfID} (\cref{alg:condctfID}). 
Due to \cref{lem:conditional}, it is easy to see that \textsc{cond-ctfID} is complete.

\begin{restatable}[\textsc{cond-ctfID} completeness]{theorem}{condctfidcompleteness}\label{thm:condctfid-completeness}
A counterfactual probability $P(\*Y_*=\*y_* \mid \*X_*=\*x_*)$ is identifiable from $\Z$ and $\G$ if and only if \textsc{cond-ctfID} returns an expression for it.
\end{restatable}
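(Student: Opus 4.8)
The plan is to obtain \cref{thm:condctfid-completeness} as a short consequence of two results already in hand: \cref{lem:conditional}, which reduces the identifiability of the conditional query $P(\*Y_*=\*y_* \mid \*X_*=\*x_*)$ to that of a single \emph{marginal} counterfactual over the set $\*D_*$ of variables lying in the ancestral components (given $\*X_*$) of the variables in $\*Y_*$; and \cref{thm:ctfid-completeness}, which says that \textsc{ctfID} is both sound and complete for marginal counterfactuals. The crux is to observe that \textsc{cond-ctfID} is engineered so that its single internal call to \textsc{ctfID} in line~\ref{alg:condctfid:callctfid} is placed on exactly the quantity $P(\bigwedge_{D_\*t \in \*D_*}\*D_{\pai{d}} = d)$ whose identifiability \cref{lem:conditional} pins down, and that the ratio formed in the last line is precisely \cref{eq:cond-ctf-factor}.

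For the \textbf{soundness} direction I would first check that lines~\ref{alg:condctfid:anccomp}--\ref{alg:condctfid:union} construct the ancestral components and the set $\*D_*$ (with its values $\*d_*$) exactly as in \cref{def:ancestral-components} and \cref{lem:conditional}. Then, assuming the call in line~\ref{alg:condctfid:callctfid} does not \textsc{Fail} and returns an expression $Q$, the soundness half of \cref{thm:ctfid-completeness} guarantees that $Q$ correctly evaluates $P(\bigwedge_{D_\*t \in \*D_*}\*D_{\pai{d}} = d)$ from the available distributions $\Z$. Substituting $Q$ into the ratio $\sum_{\*d_* \setminus (\*y_* \cup \*x_*)}Q \,/\, \sum_{\*d_* \setminus \*x_*}Q$ and invoking the identity \cref{eq:cond-ctf-factor} of \cref{lem:conditional} then shows the returned expression equals $P(\*Y_*=\*y_* \mid \*X_*=\*x_*)$; in particular the query is identifiable and the algorithm computes it correctly. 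A minor point worth stating is that the two marginalizations in the last line are literally the ones appearing in \cref{eq:cond-ctf-factor}, and that the denominator is positive whenever the conditioning event has positive probability, which is part of the standing convention for conditional queries.

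For the \textbf{completeness} direction I would argue by contraposition: \textsc{cond-ctfID} returns \textsc{Fail} only when the call in line~\ref{alg:condctfid:callctfid} fails, which by the completeness half of \cref{thm:ctfid-completeness} means $P(\bigwedge_{D_\*t \in \*D_*}\*D_{\pai{d}} = d)$ is not identifiable from $\Z$ in $\G$; the ``only if'' part of \cref{lem:conditional} then yields that $P(\*Y_*=\*y_* \mid \*X_*=\*x_*)$ is not identifiable from $\Z$ in $\G$, as desired.

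The genuine mathematical content is therefore entirely inside \cref{lem:conditional}; given that lemma the theorem is essentially a two-line deduction and presents no real obstacle. The only place where care is needed is the bookkeeping of step~\ref{alg:condctfid:callctfid}: one must make sure that the object handed to \textsc{ctfID}, namely the set $\bigcup_{D_\*t \in \*D_*}\*D_{\pai{d}}$ together with the values $\*d_*$, is syntactically the same marginal counterfactual that appears in the statement of \cref{lem:conditional} --- that is, that forming $\*D_*$ by taking ancestral components and then rewriting each subscript in terms of parent sets (as licensed by \cref{thm:ancestral-ctf-comp}) coincides with the algorithm's handling. I expect this to be routine, since the nontrivial step of showing that the ancestral-component construction captures exactly ``which variables matter'' for a conditional query has already been discharged in the proof of \cref{lem:conditional}.
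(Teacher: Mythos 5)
Your proof is correct and takes essentially the same route as the paper: the theorem is obtained by composing \cref{lem:conditional} (which reduces the conditional query to the marginal counterfactual over $\*D_*$ and gives the ratio formula) with \cref{thm:ctfid-completeness} applied to the single \textsc{ctfID} call in line~\ref{alg:condctfid:callctfid}. Your additional remarks on the positivity of the denominator and on matching the algorithm's $\bigcup_{D_\*t \in \*D_*}\*D_{\pai{d}}$ to the quantity in \cref{lem:conditional} are sensible bookkeeping but do not change the argument.
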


\section{Relation with ID* \cite{shpitser:pea07} and gID \cite{lee:etal19}}

In \cref{alg:ids}, we rewrite ID* \cite{shpitser:pea07} with the notation used in this paper and modify it for this task. Specifically, instead of returning $P_{\*x}(\bigwedge_{S_{\*t} \in \*S}S)$ in the last line, we invoke \textsc{gID} \cite{lee:etal19} to try to identify this effect from the available input distributions $\Z$. 
\begin{algorithm}[tb]
\caption{\textsc{gID*}$(\*Y_*, \*y_*, \bm{\Z}, \G)$}
\label{alg:ids}

\textbf{Input}: $\G$ a causal diagram, $(\*Y_*=\*y_*)$ a counterfactual event; and available distribution specification $\Z$.

\textbf{Output}: $P(\*Y_* = \*y_*)$ in terms of available distributions specified by $\Z$ or \textsc{Fail} if not identifiable from $\langle \G, \Z \rangle$.

\begin{algorithmic}[1]
\IFTHEN{$\*Y_* = \emptyset$}{\textbf{return} 1}
\IFTHEN{there exists $x \in \*y_*$ for some $X_{x'}$ with $x \neq x'$}{\textbf{return} 0}
\IFTHEN{there exists $x \in \*y_*$ for some $X_{x}$ with $x = x'$}{\textbf{return} \textsc{gID*}($\*Y_* \setminus \{X_x\}$, $\*y_* \setminus \{x\}, \Z, \G$)}
\STATE $(\G', (\*Y_*', \*y_*')) \gets \textbf{make-cg}(\G,(\*Y_*, \*y_*))$
\IFTHEN{$\*Y_*'$ is \textbf{INCONSISTENT}}{\textbf{return} 0}
\IF {$\G'$ has more than one c-component $\*S_{1}, \ldots, \*S_{k}$}
\RETURN $\sum_{\*v(\G) \setminus \*y_*'}\prod_{i}\textsc{gID*}((\*S_{i})_{\*v(\G) \setminus \*s_{i}}, \*y_*'(\*S_{i}), \Z, \G[\*S_{i}])$\label{alg:ids:subcall}
\ENDIF
\IF{$\G'$ has a single c-component $\*S$}
    \IFTHEN{$(\*S=\*s)$ is inconsistent}{\textbf{return} \textsc{Fail}} \label{alg:ids:fail1}
    \STATE let $\*x = \bigcup_{S_\*t \in \*S}\*t$
    \RETURN \textsc{gID}$(\*s, \*x, \Z, \G)$\label{alg:ids:gid}
\ENDIF
\end{algorithmic}
\end{algorithm}

In \cref{lem:ids-inconsistent-ctffactor} we show that from the failure of \textsc{gID*} we can establish the existence of a \ctffactor{} $(\*C_*=\*c_*)$ corresponding to a c-component of $\G[\*V(\An{\*Y_*})]$ that is either inconsistent or not identifiable from $\Z$ and $\G$. From this fact, the completeness of \textsc{gID*} follows as a corollary of the completeness of \textsc{ctfID}, that fails only under the same condition. Specifically, we can use \cref{thm:consistent-zid}, Lemma 5 (in Appendix C), and \cref{lem:factor-non-id-sum-non-id} as we did for \textsc{ctfID} to prove the same result for \textsc{gID*}. 

\begin{lemma}\label{lem:ids-inconsistent-ctffactor}
If \textsc{gID*} fails at line~\ref{alg:ids:fail1} for a counterfactual query $P(\*Y_*=\*y_*)$ given $\G$ and $\Z$, then there exists a c-component of $\G[\*V(\An{\*Y_*})]$ with \ctffactor{} ($\*C_*=\*c_*$) that is either inconsistent or not identifiable from $\Z$.
\end{lemma}

\begin{corollary}
A counterfactual probability $P(\*Y_*=\*y_*)$ is identifiable from $\Z$ and $\G$ if and only if \textsc{gID*} returns an expression for it.
\end{corollary}

\section{Conclusions}
We examined the identification of nested and non-nested counterfactuals from an arbitrary combination of observational and experimental distributions. We study nested counterfactuals within the SCM framework and prove several properties of counterfactual distributions (\cref{thm:ancestral-ctf-comp}, \ref{thm:cc-comp-factorization}) together with the counterfactual unnesting theorem (\cref{thm:counterfactualunnesting}). Moreover, we developed a graphical condition (\cref{def:inconsistent-factor}, \cref{thm:consistent-zid}) and an efficient algorithm for identifying marginal counterfactuals (\cref{alg:ctfID}) and proved their sufficiency and necessity (\cref{thm:ctfid-completeness}). Lastly, we reduce the identification of conditional counterfactuals to that of marginal ones (\cref{lem:conditional}) and give a corresponding complete algorithm (\cref{alg:condctfID}, \cref{thm:condctfid-completeness}) for this task. These results advance the state of the art by allowing for nested counterfactuals and relaxing the requirements on the data available to the analyst.

\bibliographystyle{apalike}
\bibliography{main.bib}

\end{document}